\documentclass[journal]{IEEEtran}
\usepackage[pagebackref,breaklinks,colorlinks]{hyperref}
\usepackage{url}
\usepackage[utf8]{inputenc} % allow utf-8 input
\usepackage[T1]{fontenc}    % use 8-bit T1 fonts
\usepackage{hyperref}       % hyperlinks
\usepackage{url}            % simple URL typesetting
\usepackage{booktabs}       % professional-quality tables
\usepackage{amsfonts}       % blackboard math symbols
\usepackage{nicefrac}       % compact symbols for 1/2, etc.
\usepackage{microtype}      % microtypography
\usepackage{xcolor}         % colors
\usepackage{amsmath}
\usepackage{multicol}
\usepackage{multirow}
\usepackage{graphicx}
\usepackage{wrapfig}

\usepackage{textcomp}
\usepackage{xcolor}
\usepackage{bm}
\usepackage{bbm}
\usepackage[varbb]{newpxmath}
\usepackage{algorithm}
\usepackage{algpseudocode}

\usepackage{amsmath,amssymb,amsfonts}

\usepackage{multirow}
\usepackage{caption}
\usepackage{subcaption}
\usepackage{supertabular}
\usepackage{colortbl}

\newcommand{\proposed}{{PIA}}
\newtheorem{lemma}{Lemma}
\newtheorem{corollary}{Corollary}
\definecolor{Gray}{gray}{0.92}

\begin{document}

\title{Perspective-Invariant Attack with Enhanced Transferability of Adversarial Examples}

\author{
        Kaisheng Liang,
        Yiming Cao,
        Bin Xiao,~\IEEEmembership{Fellow,~IEEE} 
\IEEEcompsocitemizethanks{
\IEEEcompsocthanksitem  
This work was supported in part by HK RGC GRF Grant PolyU 15201323. 

Kaisheng Liang, Yiming Cao, and Bin Xiao (corresponding author) are with the Department of Computing, The Hong Kong Polytechnic University, Hong Kong. (Email: kaisheng.liang@connect.polyu.hk, yiming.cao@connect.polyu.hk, b.xiao@polyu.edu.hk)
\IEEEcompsocthanksitem
\copyright{} 2026 IEEE. Personal use of this material is permitted. Permission from IEEE must be obtained for all other uses, in any current or future media, including reprinting/republishing this material for advertising or promotional purposes, creating new collective works, for resale or redistribution to servers or lists, or reuse of any copyrighted component of this work in other works.
}
}

\maketitle
\begin{abstract}
Adversarial examples generated on a surrogate deep neural network (DNN) can often successfully fool other black-box DNN models.
This cross-model transferability poses serious security threats to DNNs in practical applications.
Input transformation techniques are widely used to enhance adversarial transferability by increasing the diversity of input images.
However, existing methods primarily rely on local operations with limited degrees of freedom (DOF), such as block-wise shuffling and resizing, overlooking global perspective transformations that naturally arise from viewpoint changes.
In this work, we propose a Perspective-Invariant Attack (\proposed{}), which introduces a multi-DOF vertex sampling strategy that systematically covers the perspective transformation hierarchy from 2-DOF translation to 8-DOF projective mapping.
By generating geometrically diverse input variations, \proposed{} effectively reduces overfitting of adversarial perturbations to the surrogate model, thereby improving adversarial transferability.
We further propose PIA-Mix, a generic extension that maintains a complementary transformation pool and efficiently combines our perspective transformation with auxiliary methods for improved transferability.
Extensive experiments involving various DNN architectures, advanced defense mechanisms, and multimodal large language models (LLMs) demonstrate that \proposed{} and PIA-Mix outperform state-of-the-art transfer-based attacks.
\end{abstract}

\begin{IEEEkeywords}
    Deep neural networks, Adversarial examples, Transfer-based attacks, Transferability, Input transformation
\end{IEEEkeywords}

\section{Introduction}

Deep neural networks (DNNs)~\cite{krizhevsky2012imagenet,He_2016_CVPR,SzegedyVISW16} have achieved remarkable success in the past decade and have been widely deployed in various critical domains, including autonomous driving~\cite{geiger2012we}, biomedical imaging~\cite{ronneberger2015u}, and face recognition~\cite{schroff2015facenet}.
Despite their impressive performance, DNNs have been shown to be vulnerable to adversarial examples~\cite{GoodfellowSS14,KurakinGB17a,eykholt2018robust,MadryMSTV18}, which are carefully crafted inputs with imperceptible perturbations that mislead model predictions.
More concerning is their cross-model transferability~\cite{XieZZBWRY19,DongLPS0HL18,Wang_2021_admix}, where adversarial examples crafted on one surrogate model can effectively fool other unknown target models without access to their parameters or architectures.
This transferability property makes adversarial examples particularly threatening, as attackers can leverage surrogate models to launch practical black-box attacks against deployed DNN systems.

Adversarial attacks are generally divided into white-box and black-box settings, depending on the attacker's level of access to the target model's information~\cite{GoodfellowSS14,moosavi2016deepfool,carlini2017towards,BrendelRB18_2018_iclr,DongLPS0HL18,XieZZBWRY19,Wang_2021_admix}.
In white-box scenarios, the adversary has full knowledge of the model architecture and parameters.
In contrast, black-box attacks operate without such access, presenting a significantly harder problem.
A widely studied type of black-box attack is the transfer-based attack~\cite{DongLPS0HL18,XieZZBWRY19,Wang_2021_admix}, which exploits the transferability of adversarial examples. Adversarial inputs generated on a white-box surrogate model can often deceive unknown black-box models. However, a key limitation of this approach is that adversarial examples tend to overfit to the surrogate model.
As a result, the success rate of such attacks may degrade significantly when transferred to black-box models with different structures.

Input transformation-based techniques are widely used in transfer-based adversarial attacks for improving adversarial transferability.
Representative methods include DI~\cite{XieZZBWRY19}, SI~\cite{LinS00H20}, Admix~\cite{Wang_2021_admix}, and SSA~\cite{long2022frequency}.
Recently, SIA~\cite{wang2023structure}, BSR~\cite{wang2024boosting}, and SID~\cite{zhou2025leveraging} achieved state-of-the-art results through block-wise shuffling, rotated block-wise shuffling, and multi-scale resizing with local image fusion, respectively.
These attacks rely on the idea that applying effective transformations increases image diversity~\cite{wang2023structure}, which helps mitigate overfitting to the surrogate model and improves transferability.
As shown in Figure~\ref{fig:intro_pia}, existing methods such as SIA and BSR focus on local block-wise operations, while SID applies multi-scale resizing with random padding and block-level image fusion. However, these methods rely on spatial augmentations within the image plane and do not consider global perspective transformations that naturally arise from viewpoint changes.
We argue that incorporating the perspective transformation family as a global geometric augmentation can further improve input diversity and thus enhance adversarial transferability.

In this work, we propose a Perspective-Invariant Attack (\proposed{}) to enhance adversarial transferability by making adversarial examples invariant to perspective transformations.
Specifically, we introduce a multi-DOF vertex sampling strategy that systematically covers the perspective transformation hierarchy, from 2-DOF translation to 8-DOF projective mapping.
By randomly sampling transformation modes and corresponding vertex positions, our method generates geometrically diverse input variations that go beyond the in-plane spatial augmentations of existing methods.
Furthermore, we propose PIA-Mix, a generic extension that maintains a complementary transformation pool. PIA-Mix treats our perspective transformation as the primary augmentation while incorporating auxiliary input transformation methods through random sampling, achieving further transferability gains with minimal computational overhead.
We conduct extensive experiments in different scenarios, including target DNNs with different architectures, combinations with existing attack techniques, advanced defensive mechanisms, and multimodal large language models (LLMs). As shown in Figure~\ref{fig:intro_performance}, \proposed{} and PIA-Mix achieve superior performance compared to existing methods across all evaluated settings.

\begin{figure}[t]
    \centering
    \includegraphics[width=0.85\linewidth]{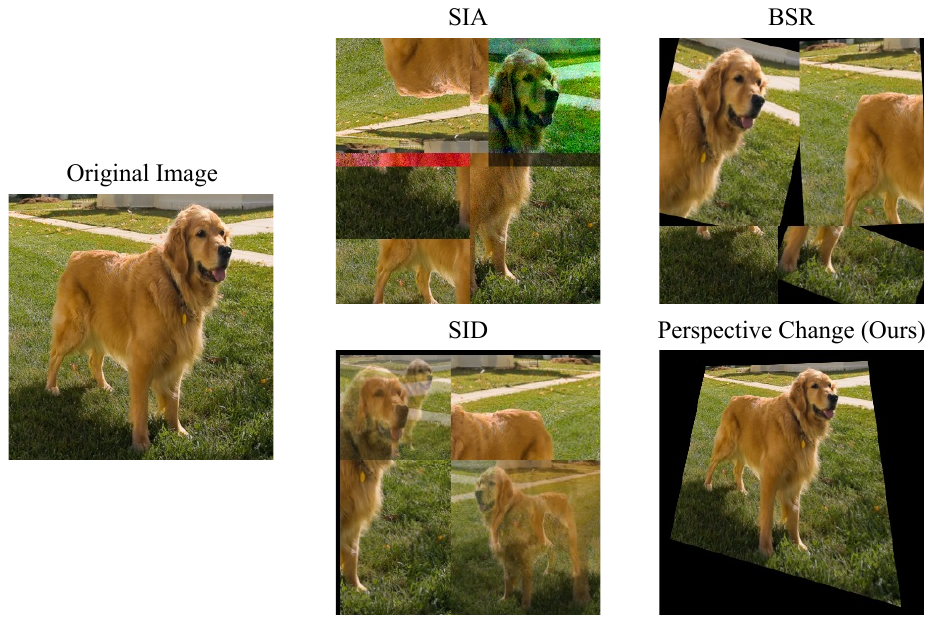}
    \caption{
        Comparison of input transformations in state-of-the-art attacks, including SIA~\cite{wang2023structure}, BSR~\cite{wang2024boosting}, and SID~\cite{zhou2025leveraging}.
        SIA performs block-wise shuffling, BSR combines shuffling with slight rotations, and SID applies multi-scale resizing with random padding and local image fusion.
        These methods rely on spatial augmentations within the image plane and do not consider global perspective transformations. In contrast, our method introduces diverse perspective transformations that cover the full projective transformation family.
    }
    \label{fig:intro_pia}
\end{figure}

\begin{figure}[t]
    \centering
    \includegraphics[width=0.85\linewidth]{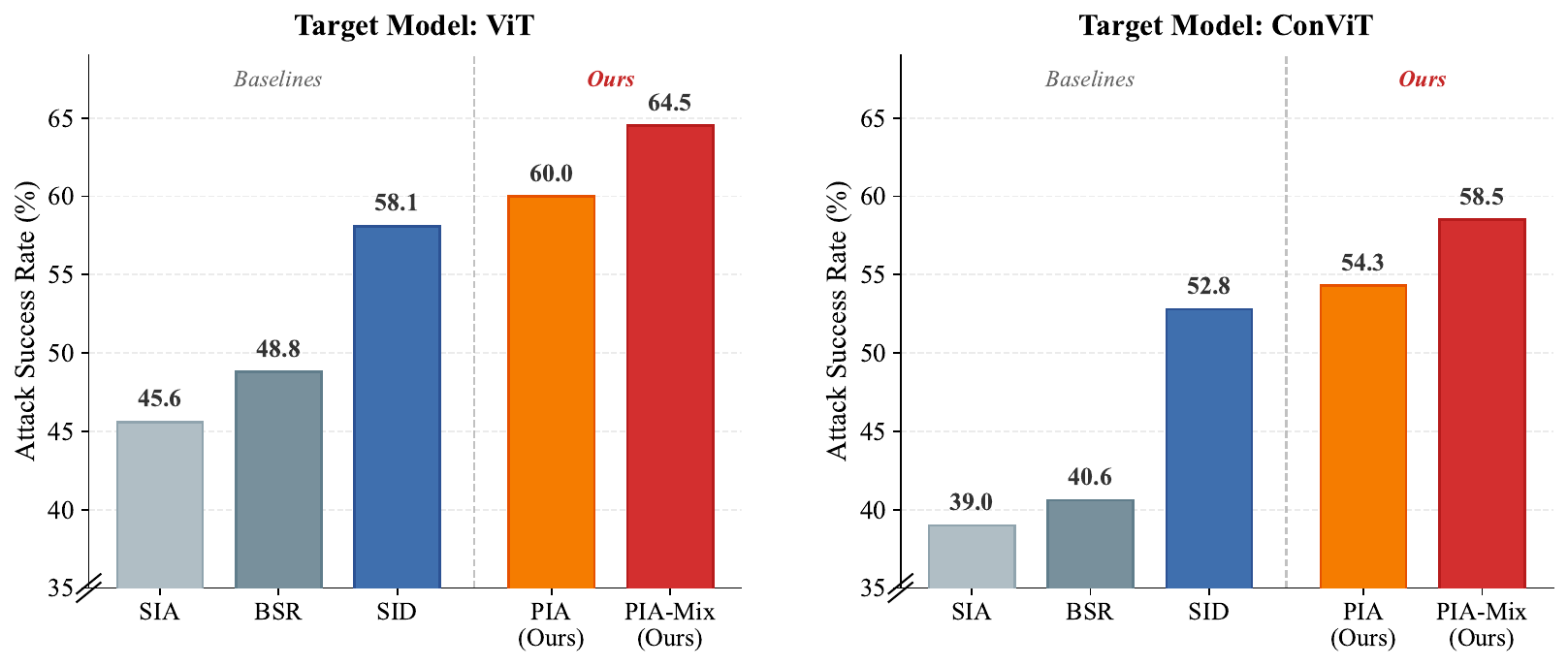}
    \caption{Average attack success rates (\%) of different methods using ResNet-50 as the surrogate model.
    Our PIA and PIA-Mix outperform existing methods.}
    \label{fig:intro_performance}
\end{figure}

In summary, this work makes the following contributions:
\begin{itemize}
    \item We revisit existing input transformation-based attacks and identify that recent methods rely on spatial augmentations within the image plane, overlooking global perspective transformations that naturally arise from viewpoint changes.
    \item We propose Perspective-Invariant Attack (\proposed{}), which introduces a multi-DOF vertex sampling strategy to systematically explore the perspective transformation hierarchy. We further propose PIA-Mix, which combines \proposed{} with complementary transformations via a configurable transformation pool for additional performance gains.
    \item Comprehensive experiments demonstrate that \proposed{} and PIA-Mix significantly outperform prior methods.
    Empirical results also show that our methods can be easily and effectively integrated with other attack techniques and frameworks.
\end{itemize}

\section{Related Work}

Although DNNs have made significant progress in tasks such as image classification, segmentation, and detection, adversarial attacks~\cite{GoodfellowSS14,SzegedyZSBEGF13} have revealed their vulnerability: adding small perturbations to the input can cause DNNs to make incorrect predictions.
To investigate the vulnerability of DNNs, researchers have proposed various adversarial attack methods aimed at generating adversarial examples that can mislead DNNs in different scenarios, including targeted attacks~\cite{zhao2021success,liang2024improving}, generative attacks~\cite{poursaeed2018generative,naseer2019cross}, physical-world attacks~\cite{eykholt2018robust,li2025uvattack}, query-based attacks~\cite{BrendelRB18_2018_iclr,ilyas2018black}, and transfer-based attacks~\cite{DongLPS0HL18,XieZZBWRY19}.
Among these, transfer-based attacks refer to scenarios where attackers use adversarial examples generated from a white-box surrogate model to successfully attack black-box models, posing serious security threats to DNNs.
This paper focuses on transfer-based attacks~\cite{Wang_2021_admix,Wu0X0M20,GuoLC20,liang2023styless,HuangKGHBL19,dai2024advdiff}, particularly on improving the transferability of adversarial examples when there are significant differences in model architectures.

\subsection{Transfer-based Adversarial Attacks}

Transfer-based adversarial attacks commonly build upon the Iterative Fast Gradient Sign Method (I-FGSM)~\cite{KurakinGB17a}, an extension of the Fast Gradient Sign Method (FGSM)~\cite{GoodfellowSS14}.
While I-FGSM shows strong performance in white-box settings with knowledge of target models, it performs poorly in black-box scenarios where target models are unknown.
To address this limitation, the Momentum-based I-FGSM (MI-FGSM)~\cite{DongLPS0HL18} incorporates momentum terms to stabilize gradient updates.
Nevertheless, a considerable performance gap persists between white-box and black-box attack scenarios, highlighting the need for more powerful transfer-based methods.

Input transformation-based attacks are a class of methods that aim to generate diverse input images to enhance adversarial transferability.
Diverse Input (DI)~\cite{XieZZBWRY19} employs random resizing and padding operations to generate diverse input images. Translation-Invariant (TI)~\cite{DongPSZ19} focuses on the translation property of input images and introduces a convolutional kernel to process gradients.
Scale-Invariant (SI)~\cite{LinS00H20} leverages the loss preservation property of pixel value scaling and utilizes multiple scaled images for data augmentation. The Admix method~\cite{Wang_2021_admix} combines small portions of other images to create multiple transformed images.
Spectrum Simulation Attack (SSA)~\cite{long2022frequency} perturbs the frequency domain of input images using Gaussian noise and random masking. 
DeCoWA~\cite{lin2024boosting} applies deformation-constrained warping to generate semantically consistent yet locally diverse inputs.
SIA~\cite{wang2023structure} introduces block-wise shuffling combined with random input transformations on each block, while BSR~\cite{wang2024boosting} extends the block shuffling concept by incorporating slight rotations of individual blocks.
Recently, SID~\cite{zhou2025leveraging} exploits the spatial invariance of DNNs through multi-scale and multi-position input transformations.
Beyond fixed transformation designs, some methods adopt search-based frameworks to explore optimal transformation strategies within a large transformation space.
L2T~\cite{zhu2024learning} proposes a learning-based framework to search for the optimal transformation for each input image.
OPS~\cite{guo2025boosting} performs stochastic optimization over surrogate model neighborhoods induced by input transformations and perturbations.
However, these methods typically incur significantly higher computational costs than single-transformation approaches and still rely on a predefined set of input transformations, making their performance dependent on the choice of effective base transformations.

In addition to input transformation-based attacks, researchers have proposed various other transfer-based attacks. 
Ensemble-based methods~\cite{DongLPS0HL18,LiuCLS17} leverage multiple surrogate models simultaneously to generate more transferable adversarial examples.
Ghost networks~\cite{li2020learning} simulate various surrogate networks using perturbed skip connections and dropout layers.
Transferable Adversarial Perturbation (TAP)~\cite{ZhouHCTHGY18} enhances transferability by maximizing the feature-space distance between benign and adversarial examples.
Skip Gradient Method (SGM)~\cite{Wu0X0M20} analyzes ResNet-based architectures and incorporates more gradients from skip connections.
LinBP~\cite{GuoLC20} introduces skipping of activation layers during the backpropagation process.
Feature Importance Aware (FIA) attack~\cite{Wang_2021_ICCV} focuses on corrupting critical features at the feature level, while Neuron Attribution-based Attack (NAA)~\cite{zhang2022improving} proposes a feature-level attack by utilizing advanced neuron attribution.

\subsection{Defenses against Adversarial Examples}

The High-Level Representation Guided Denoiser (HGD)~\cite{liao2018defense} proposes a denoising network trained to restore clean images from perturbed ones.
Adversarial Training (AT)~\cite{MadryMSTV18} enhances the adversarial robustness of DNNs by incorporating adversarial examples into the training process.
Building upon AT, Ensemble Adversarial Training~\cite{tramer2017ensemble} leverages multiple surrogate models to achieve better attack performance and robustness.
To address the computational cost of adversarial training, Fast Adversarial Training~\cite{wong2020fast} demonstrates that using single-step attacks can provide reasonable robustness with significantly reduced training time.
Random Smoothing (RS)~\cite{cohen2019certified} represents a major advance in certified defenses by providing provable robustness guarantees against adversarial perturbations.
The Neural Representation Purifier (NRP)~\cite{naseer2020self} employs self-supervised learning techniques to train a denoising network that can purify adversarial examples.
More recently, Diffusion-based Purification (DiffPure)~\cite{nie2022DiffPure} leverages pre-trained diffusion models to effectively remove adversarial perturbations through the diffusion process.

\section{Methodology}

\subsection{Threat Model and Problem Formulation}

\textbf{Threat Model.} We adopt the standard digital, transfer-based, black-box setting used in prior input-transformation attacks~\cite{XieZZBWRY19,wang2023structure}: the attacker has white-box access only to a surrogate model $F$ (a pre-trained image classifier) but has no knowledge of the target models, including their architectures, parameters, and training data. Adversarial examples $x^{\text{adv}}$ are crafted on $F$ under an $\ell_{\infty}$ perturbation budget and applied in the digital image domain.
The adversary's goal is to cause the unknown target models to misclassify $x^{\text{adv}}$.
The viewpoint analogy serves only as geometric motivation, and our perspective transformation is applied purely as a digital input transformation.

\textbf{Problem Formulation.} Given a benign image $x$, a surrogate model $F$, and a ground-truth label $y$, the optimization objective on the surrogate model in an untargeted setting can be formulated as:
\begin{equation}
    \arg\min_{x^{\text{adv}}} \mathcal{L}(F(x^{\text{adv}}), y), \quad s.t. \quad \|x - x^{\text{adv}}\|_{\infty} \leq \epsilon,
    \label{eq:threat_model}
\end{equation}
where $\mathcal{L}$ is the adversarial loss (e.g., negative cross-entropy loss) and $\epsilon$ is the perturbation budget.

To solve the problem above, existing methods usually adopt the iterative gradient-based technique, e.g., the MI-FGSM~\cite{DongLPS0HL18}, formulated as follows:
\begin{align}
    &g_{i+1} = \nabla_x \mathcal{L}(F(x^{\text{adv}}_i), y), \label{eq:i_fgsm_grad} \\ % [10pt]
    &\tilde g_{i+1} = \mu \cdot \tilde g_{i} + g_{i+1}/\| g_{i+1}\|_1, \label{eq:i_fgsm_momentum} \\
    &x^{\text{adv}}_{i+1} = x^{\text{adv}}_i - \eta \cdot \text{sign} \left( \tilde g_{i+1} \right), \label{eq:i_fgsm_update} \\ % [10pt]
    &x^{\text{adv}}_{i+1} = \text{Clip}_{x, \epsilon} \left( x^{\text{adv}}_{i+1} \right), \label{eq:i_fgsm_clip}
\end{align}
where $g_i$ is the gradient of the loss function with respect to the input image $x$ in the $i$-th iteration.
$\mu$ is the momentum decay factor, and $\eta$ is the step size.
$x^{\text{adv}}_{i+1}$ is clipped within the $\epsilon$ bound.
Initially, we set $x^\text{adv}_0=x$ and $\tilde g_0=0$.

\subsection{Motivation}
\label{sec:motivation}

Input transformation-based attacks improve adversarial transferability by making input images more diverse. However, existing methods mainly use fixed geometric operations with limited degrees of freedom (DOF). For example, methods like resizing, translation, or block-wise shuffling explore only a very small transformation space.
They fail to capture the global perspective transformations induced by viewpoint changes.
As a result, the generated adversarial examples often overfit to this narrow space, limiting their ability to transfer to black-box models.
Changes in camera position or object angles create a natural hierarchy of geometric transformations. This hierarchy ranges from simple 2-DOF translation up to complex 8-DOF projective transformations. It is important to note that lower-DOF operations like translation and affine transformations are mathematically special cases of the broader perspective transformation family. Unlike manually designed image distortions, these perspective transformations greatly increase image diversity while preserving straight lines and the object's geometric structure.
This observation motivates us to fully utilize the high-DOF perspective transformation family for generating adversarial examples. By exploring the entire hierarchy of perspective changes, ranging from simple low-DOF cases to the full 8-DOF projective mapping, we can overcome the geometric limitations of previous attacks. 
As illustrated in Figure~\ref{fig:motive_dof}, as the degrees of freedom increase (e.g., from 2 to 8 DOF), the transformed images become progressively more diverse, leading to stronger adversarial transferability.

\begin{figure}[htbp]
    \centering
    \begin{subfigure}{0.49\linewidth}
        \centering
        \includegraphics[width=\linewidth]{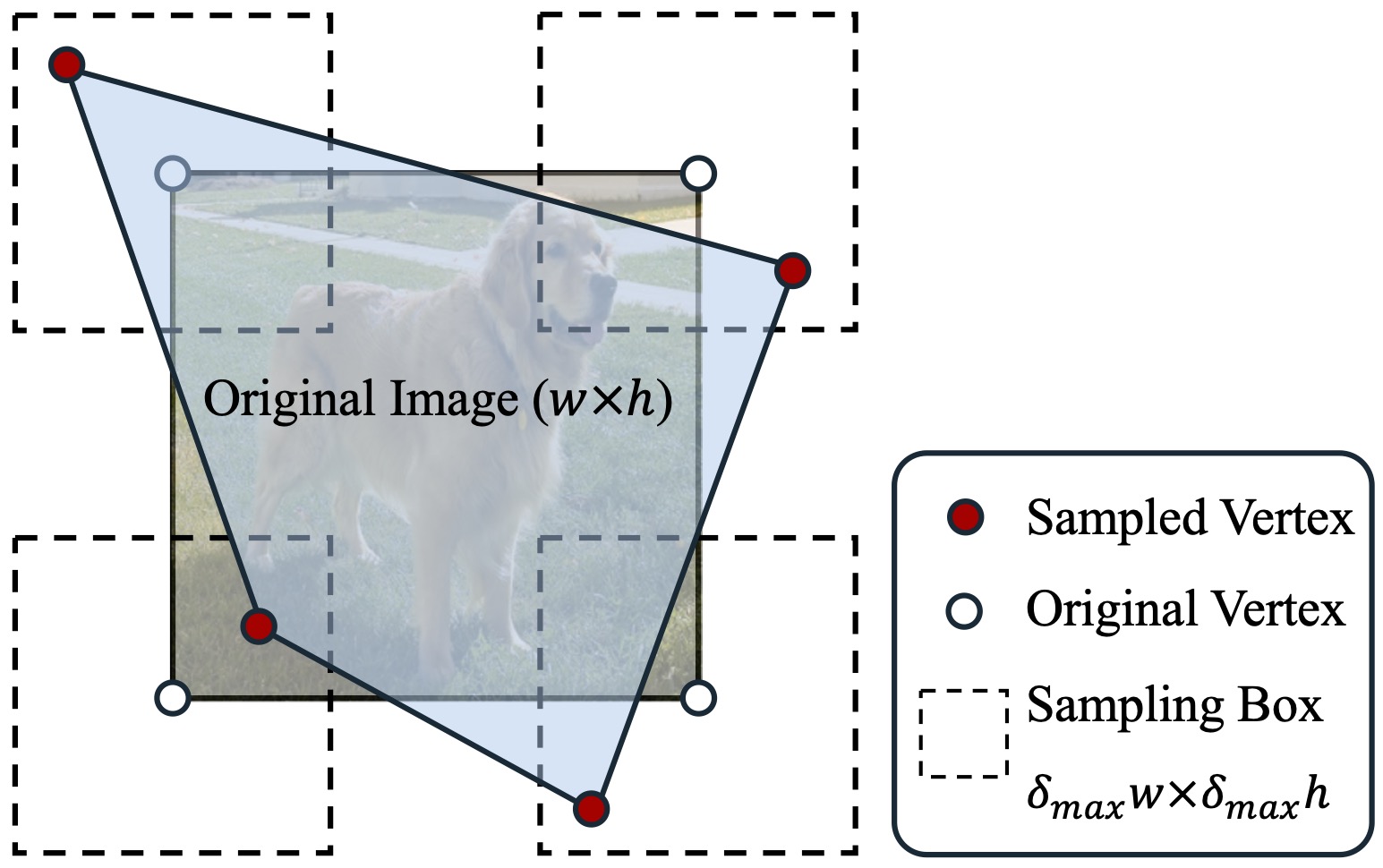}
        \caption{}
        \label{fig:motive_sampling}
    \end{subfigure}
    \hfill
    \begin{subfigure}{0.49\linewidth}
        \centering
        \includegraphics[width=\linewidth]{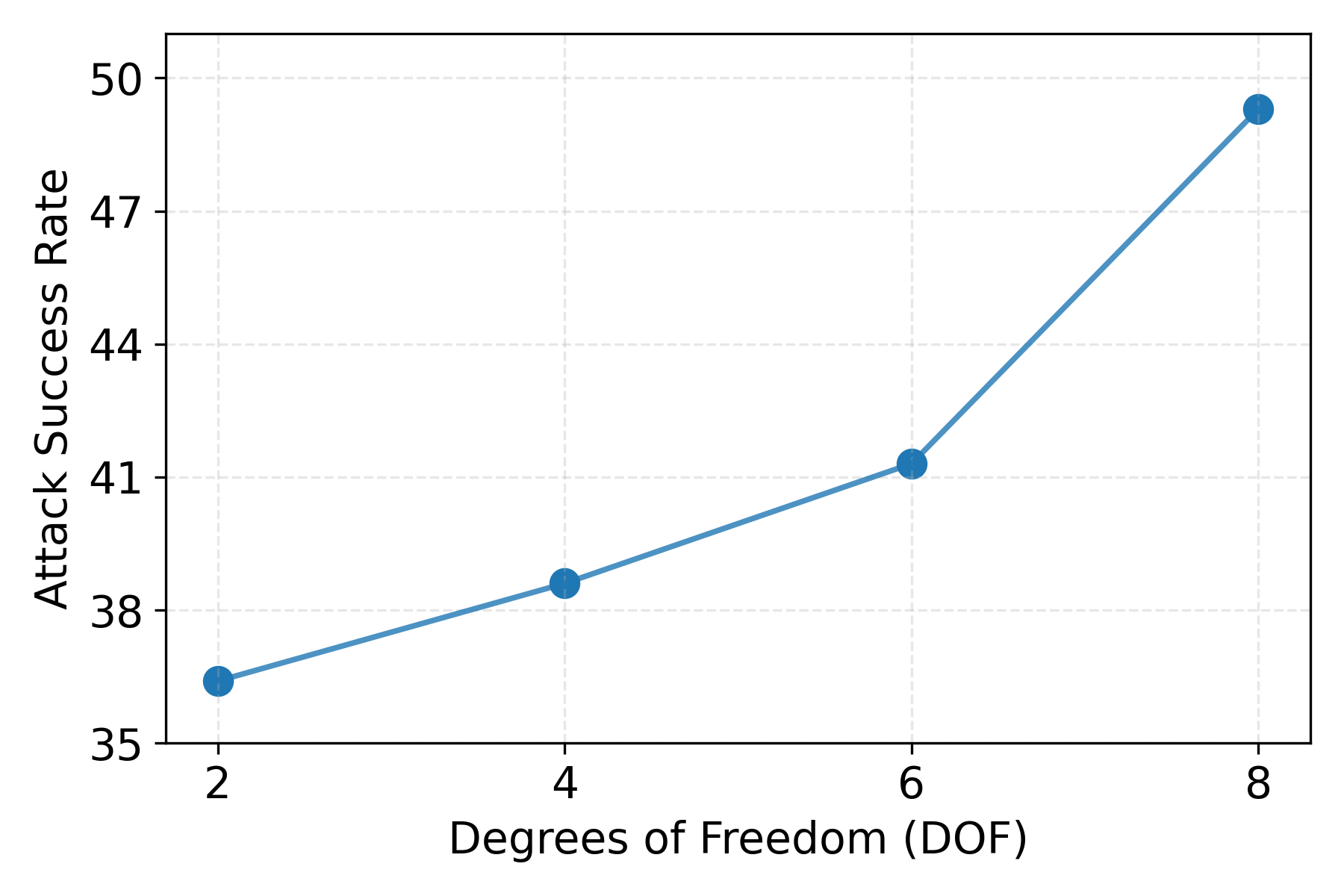}
        \caption{}
        \label{fig:motive_dof_b}
    \end{subfigure}
    \caption{
    \textbf{(a)}~Vertex sampling, where $\delta_{max}$ is the distortion budget. Each corner is sampled within the sampling box around its original vertex and may fall outside the image; out-of-frame regions are then cropped to the original size.
    \textbf{(b)}~Attack success rates (\%) under different DOF (ResNet-50 and ConViT as surrogate and target). As the geometric DOF increases from 2 to 8, adversarial transferability consistently improves.
    }
    \label{fig:motive_dof}
\end{figure}

A natural question arises: should we use transformations with even higher degrees of freedom, such as non-rigid operations like thin plate splines (TPS) or elastic deformations? We conjecture that while global parametric transformations provide stable gradient directions by preserving geometric structures, the unconstrained warping introduced by non-rigid operations might disrupt these directions, leading to suboptimal adversarial transferability. To validate this, we conduct a systematic comparison in Section~\ref{sec:compare_non_rigid}.
As shown in Table~\ref{tab:motive_non_rigid} below, the experimental results suggest that increasing geometric flexibility beyond projective transformations does not necessarily improve attack performance, highlighting the effectiveness of the 8-DOF perspective transformation as a well-balanced choice.

\begin{table}[htbp]
    \centering
    \resizebox{0.8\linewidth}{!}{%
    \begin{tabular}{lcc}
        \toprule[0.15em]
        Method & Transformation Type &  ASR (\%) \\
        \midrule
        Baseline (MI-FGSM) & None            & \textit{13.7} \\
        \midrule
        Affine          & Global (6-DOF)      &  41.3 \\
        Elastic         & Non-rigid (High)   &   23.0 \\
        TPS             & Non-rigid (High)   &   37.5 \\
        \midrule
        \rowcolor{gray!10} % 
        \textbf{Perspective (Ours)} & \textbf{Global (8-DOF)} & \textbf{49.3 (Highest)} \\
        \bottomrule[0.15em]
    \end{tabular}%
    }
    \caption{
    Attack success rates (\%) under different transformation types, using ResNet-50 and ConViT as the surrogate and target models, respectively. While non-rigid transformations (e.g., Elastic and TPS) introduce higher flexibility, they do not yield better transferability than global parametric transformations. Our 8-DOF perspective transformation achieves the highest success rate.
    }
    \label{tab:motive_non_rigid}
\end{table}

\subsection{Proposed Method}
In this section, we first introduce perspective transformation and then present our perspective-invariant attack (PIA) and PIA-Mix.
Figure~\ref{fig:method_illustration} provides an overview of our approach: the upper part illustrates the overall attack pipeline, where transformations are sampled from a Complementary Transformation Pool to generate diverse inputs for gradient computation; the lower part details our core perspective transformation, including the Multi-DOF vertex sampling strategy and the procedure for solving the transformation matrix.
In contrast to existing transformation-based attacks that typically apply a single geometric operation with limited DOF (e.g., resizing or translation), our method introduces two key designs: (1)~\textit{Multi-DOF Vertex Sampling}, which systematically explores the full hierarchy of perspective transformations from 2 to 8~DOF, covering a significantly richer geometric space; and (2)~the \textit{Complementary Transformation Pool}, which unifies our perspective transformation with auxiliary methods to exploit their complementary strengths without additional computational overhead.
In the following, we present each component from the ground up, starting with the perspective transformation itself and building toward the complete PIA and PIA-Mix frameworks.

\begin{figure*}[!t]
    \centering
    \includegraphics[width=0.9\linewidth]{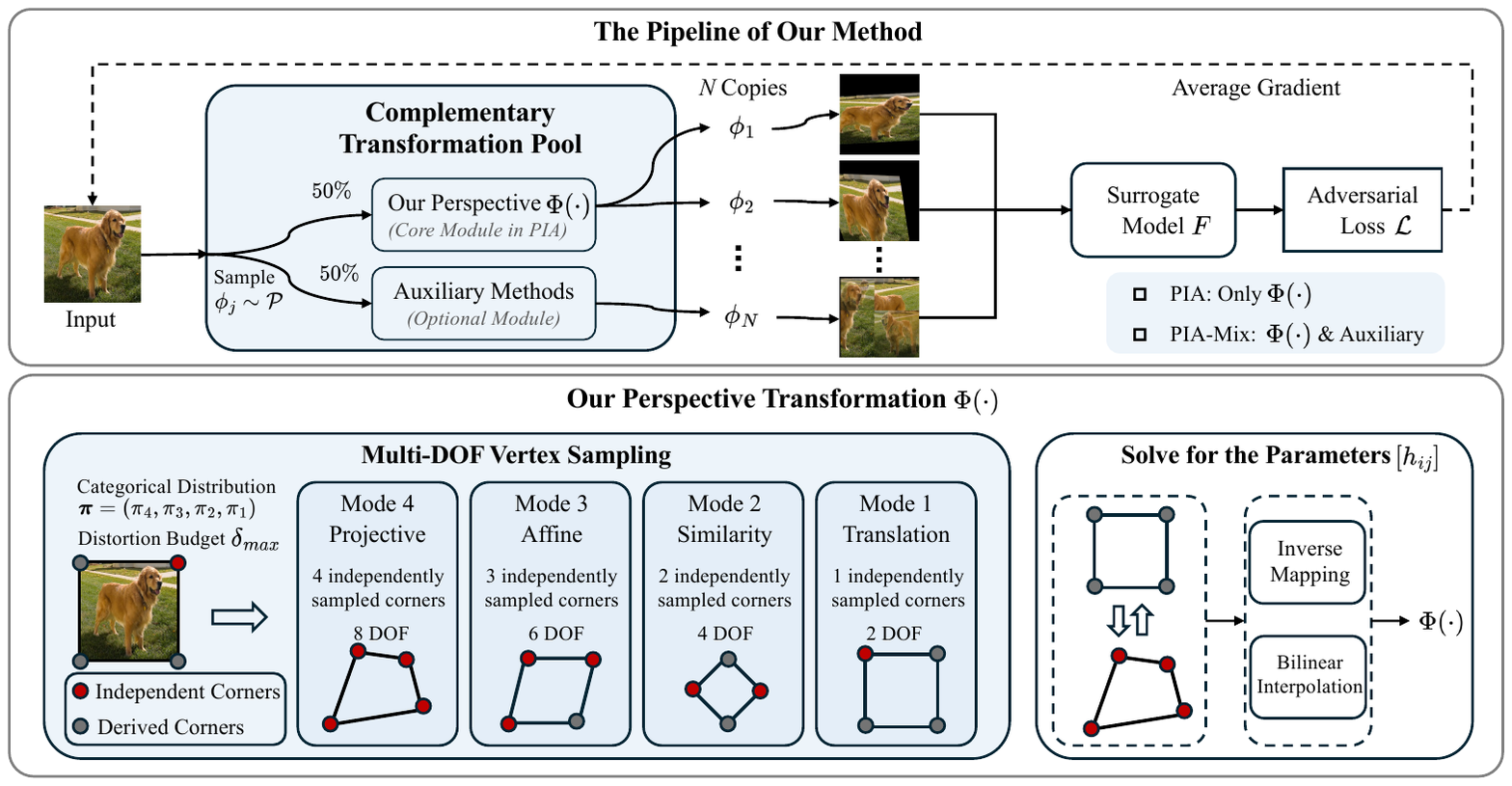}
    \caption{
    Overview of the proposed method.
    \textbf{Upper:} The pipeline of PIA-Mix. Transformations are sampled from a Complementary Transformation Pool~$\mathcal{P}$ containing our~$\Phi(\cdot)$ and auxiliary methods. Gradients over $N$ transformed copies are averaged for momentum-based updates. When $\mathcal{P}$ contains only $\Phi(\cdot)$, it reduces to PIA.
    \textbf{Lower:} Our Perspective Transformation~$\Phi(\cdot)$. The left panel shows Multi-DOF Vertex Sampling: from Mode~4 to Mode~1, the number of independently sampled corners decreases from 4 to 1, yielding 8 to 2 DOF. The right panel derives the $3\times3$ transformation matrix $[h_{ij}]$ from the sampled corners.
    }
    \label{fig:method_illustration}
\end{figure*}

\subsubsection{Perspective Transformation}
This transformation, also known as the projective transformation, can be represented as a $3 \times 3$ matrix $[h_{ij}]_{3 \times 3}$~\cite{szeliski2022computer}:

\begin{equation}
\begin{bmatrix} \hat{u} \\ \hat{v} \\ \hat{w} \end{bmatrix} =
\begin{bmatrix} h_{11} & h_{12} & h_{13} \\ h_{21} & h_{22} & h_{23} \\ h_{31} & h_{32} & 1 \end{bmatrix}
\begin{bmatrix} u \\ v \\ 1 \end{bmatrix},
\label{eq:perspective_transformation}
\end{equation}
\begin{equation}
\begin{aligned}
u' = \frac{\hat{u}}{\hat{w}} &= \frac{h_{11} u + h_{12} v + h_{13}}{h_{31} u + h_{32} v + 1}, \\
v' = \frac{\hat{v}}{\hat{w}} &= \frac{h_{21} u + h_{22} v + h_{23}}{h_{31} u + h_{32} v + 1},
\label{eq:perspective_transformation_homogenization}
\end{aligned}
\end{equation}
where $(u, v)$ are the coordinates of the original image, $(u', v')$ are the coordinates of the transformed image, and $h_{ij}$ are the elements of the transformation matrix.
To address the issues of holes and pixel overlapping in the transformed image, we can employ inverse mapping to establish a correspondence between the transformed and original images.
Specifically, for each pixel coordinate $(u', v')$ in the transformed image, we compute its corresponding position $(u, v)$ in the original image by applying the inverse transformation matrix.
Since the computed coordinates $(u, v)$ may not be integer coordinates, we can utilize interpolation techniques (e.g., bilinear interpolation) to estimate the pixel value at $(u, v)$.

The commonly used transformations such as scaling, rotation, and translation can be represented as special cases of the projective transformation.
For example, $\begin{bmatrix} h_{11} & h_{12} \\ h_{21} & h_{22} \end{bmatrix}$ defines scaling and rotation, while parameters $h_{13}$ and $h_{23}$ represent translation. Existing transformation-based attacks have proved that scaling~\cite{XieZZBWRY19}, translation~\cite{DongPSZ19}, and rotation~\cite{wang2024boosting} are effective for enhancing the transferability of adversarial examples.
However, the perspective transformation still remains under-explored in the context of adversarial attacks.
Specifically, the parameters $h_{31}$ and $h_{32}$ control the perspective of the transformed image, which is not considered in existing input transformation-based attacks.

\subsubsection{Randomized Four-Corner Parameterization}
Based on Equation~\ref{eq:perspective_transformation}, the degree of freedom (DOF) of the perspective transformation is eight.
Thus, given four pairs of corresponding points in a 2D image, we can solve for the transformation matrix.
Specifically, we randomly sample four positions in the input image and use them as the four corners of the transformed image.
We formulate the four randomized corner positions as follows:
\begin{equation}
    \begin{aligned}
    u_{i,j}^\prime &= u_{i,j} + \operatorname{int} \left( \xi_{i,j}^h \cdot (\delta_{max} \cdot \frac{h}{2}) \right), \\
    v_{i,j}^\prime &= v_{i,j} + \operatorname{int} \left( \xi_{i,j}^w \cdot (\delta_{max} \cdot \frac{w}{2}) \right), \\
    &\text{where } i \in \{0,h-1\}, j \in \{0,w-1\},
    \end{aligned}
    \label{eq:perspective_transformation_corner}
\end{equation}
where $\xi_{i,j}^h$ and $\xi_{i,j}^w$ are randomly sampled from the uniform distribution $U(-1, 1)$; 
$h$ and $w$ are the height and width of the input image, respectively;
$\delta_{max}$ is the maximum distortion budget;
$\delta_{max} \cdot \frac{h}{2}$ and $\delta_{max} \cdot \frac{w}{2}$ are the maximum displacements of the corner positions;
$(u_{i,j}, v_{i,j})$ represents a corner position of the input image;
$u_{i,j}^\prime$ and $v_{i,j}^\prime$ are the coordinates of the transformed corner position.
Note that the transformed corner coordinates $(u_{i,j}^\prime, v_{i,j}^\prime)$ are allowed to fall outside the original image boundaries. The resulting transformed image is then simply cropped to maintain the original spatial dimensions.

\subsubsection{Multi-DOF Vertex Sampling}
In the above formulation, Equation~\ref{eq:perspective_transformation_corner} samples four corner positions independently. However, this pure 4-point sampling is strongly biased toward the generic projective space, leaving lower-degree-of-freedom (DOF) transformations (e.g., affine or pure translation) heavily under-represented.
In fact, four independently sampled corners form a parallelogram (i.e., yield an affine map with $h_{31}{=}h_{32}{=}0$) only with vanishingly small probability:

\begin{lemma}
\label{lem:degenerate}
Let four points be sampled independently, with the $k$-th point drawn uniformly from an $N_x \times N_y$ grid of integer positions centered at the $k$-th vertex of a parallelogram. Then the probability that the four points form a parallelogram is
\begin{equation}
    P_{\mathrm{aff}}(N_x, N_y) = \frac{2N_x^2+1}{3N_x^3} \cdot \frac{2N_y^2+1}{3N_y^3}.
    \label{eq:p_aff}
\end{equation}
\end{lemma}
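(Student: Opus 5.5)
Label the four parallelogram vertices in cyclic order $A_1,A_2,A_3,A_4$, so that $A_1+A_3=A_2+A_4$. Write each sampled point as $P_k=A_k+(a_k,b_k)$, where $a_k$ and $b_k$ are independent offsets, uniform on integer sets $S_x$ of size $N_x$ and $S_y$ of size $N_y$. In the same cyclic order, $P_1P_2P_3P_4$ is a parallelogram exactly when $P_1+P_3=P_2+P_4$, i.e.
\begin{equation*}
a_1+a_3=a_2+a_4 \quad\text{and}\quad b_1+b_3=b_2+b_4 .
\end{equation*}
The $x$- and $y$-offsets are independent, so the probability factors as $p(N_x)\,p(N_y)$. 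Here
\begin{equation*}
p(N)=\Pr[a_1+a_3=a_2+a_4]
\end{equation*}
for four i.i.d.\ uniform variables on $N$ consecutive integers. Centering shifts every offset by the same constant, which cancels on both sides, so we may take the support to be $\{0,\dots,N-1\}$.

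The core step is a collision count. Let $r(s)$ be the number of ordered pairs with $a_1+a_3=s$; this is the triangular sequence $r(s)=\min(s+1,\,2N-1-s)$ for $s=0,\dots,2N-2$. Since $(a_1,a_3)$ and $(a_2,a_4)$ are independent pairs with the same distribution, $p(N)=\sum_s r(s)^2/N^4$. Summing the squares of the triangular profile gives
\begin{equation*}
\sum_s r(s)^2=2\sum_{t=1}^{N-1}t^2+N^2=\frac{(N-1)N(2N-1)}{3}+N^2=\frac{N(2N^2+1)}{3}.
\end{equation*}
Hence $p(N)=(2N^2+1)/(3N^3)$, and multiplying the two coordinate factors yields Equation~\eqref{eq:p_aff}.

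The main obstacle is the modeling step, not the arithmetic. We must justify that "the four points form a parallelogram" means the specific correspondence $P_1+P_3=P_2+P_4$, which is what makes the transformation affine with $h_{31}=h_{32}=0$. Other pairings of the points, and degenerate collinear configurations, must either be excluded by this correspondence or assumed away. This is legitimate because the homography maps corners to corners in a fixed order. I would state that convention explicitly and then note that the computation is exact under it.

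As a sanity check, $N=1$ gives probability $1$, as it should. For large $N$ the factor behaves like $2/(3N)$, so $P_{\mathrm{aff}}$ is of order $1/(N_xN_y)$. This vanishes as the distortion budget grows, which supports the paper's motivating claim.
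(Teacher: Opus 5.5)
Your proof is correct and follows essentially the same route as the paper's. Both arguments reduce the parallelogram condition in the fixed cyclic order to $a_1+a_3=a_2+a_4$ on each axis, factor over the independent $x$- and $y$-offsets, and compute the collision probability of two i.i.d.\ triangular sums. Beyond the paper, you write out $\sum_s r(s)^2 = N(2N^2+1)/3$ explicitly and make explicit the fixed corner-correspondence convention, which the paper uses only implicitly through its cyclic labeling.
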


\noindent\textit{Proof.}
Label the corners cyclically, so $\{P_1,P_3\}$ and $\{P_2,P_4\}$ are the two pairs of opposite vertices; a quadrilateral is a parallelogram iff its diagonals share a midpoint, i.e., $P_1{+}P_3=P_2{+}P_4$. As the box centers already form a parallelogram, this reduces to $a_1{+}a_3=a_2{+}a_4$ for the $x$-offsets $a_k$ (and independently for the $y$-offsets). On one axis, $a_1{+}a_3$ and $a_2{+}a_4$ are i.i.d.\ sums of two uniform offsets and hence follow the triangular pmf $p_k=(N-|k|)/N^2$, so they coincide with probability $\sum_k p_k^2 = \frac{1}{N^4}\sum_{|k|<N}(N-|k|)^2 = \frac{2N^2+1}{3N^3}$. Multiplying the independent $x$- and $y$-axes yields Equation~\ref{eq:p_aff}.
\hfill$\square$

\begin{corollary}
\label{cor:degenerate}
Under Equation~\ref{eq:perspective_transformation_corner} with budget $\delta_{max}$, each corner is sampled from a box of $N_x \approx \delta_{max} w$ by $N_y \approx \delta_{max} h$ integer positions, so $P_{\mathrm{aff}} \approx 4/(9 w h \delta_{max}^2)$. For our setting ($\delta_{max}{=}0.6$, $w{=}h{=}299$), $P_{\mathrm{aff}} \approx 1.4 \times 10^{-5}$.
\end{corollary}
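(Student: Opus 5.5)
The corollary follows from Lemma~\ref{lem:degenerate} once we compute the grid sizes implied by Equation~\ref{eq:perspective_transformation_corner} and substitute them into Equation~\ref{eq:p_aff}.

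\textbf{Step 1: identify the grid sizes.} For each corner, the horizontal offset is $\operatorname{int}(\xi\cdot\delta_{max} w/2)$ with $\xi\sim U(-1,1)$. Its integer values range over roughly $[-\delta_{max}w/2,\delta_{max}w/2]$, so there are $N_x\approx \delta_{max} w$ of them; likewise $N_y\approx\delta_{max} h$. The four corners of the original image form a rectangle, and hence a parallelogram. The corners are also sampled independently, so the hypotheses of Lemma~\ref{lem:degenerate} hold.

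\textbf{Step 2: check the uniformity hypothesis.} This is the step I expect to be the only real obstacle. Truncation toward zero through $\operatorname{int}$ doubles the mass at offset $0$ and halves the mass at the extreme values, so the offsets are only approximately uniform. I would absorb this into the ``$\approx$'': the collision probability $\sum_k p_k^2$ changes only by a relative $O(1/N)$ amount. Alternatively, one can simply treat the sampling as uniform on the grid, as the statement does.

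\textbf{Step 3: take the large-$N$ limit and substitute.} For large $N$,
\[
\frac{2N^2+1}{3N^3}\approx\frac{2}{3N}.
\]
Applying this to each axis gives
\[
P_{\mathrm{aff}}\approx\frac{2}{3N_x}\cdot\frac{2}{3N_y}=\frac{4}{9N_xN_y}=\frac{4}{9wh\delta_{max}^2}.
\]
With $\delta_{max}=0.6$ and $w=h=299$ we get $N_x=N_y\approx 179.4$. Then $wh\delta_{max}^2\approx 89401\times0.36\approx 32184$, and
\[
P_{\mathrm{aff}}\approx \frac{4}{9\times 32184}=\frac{4}{289659}\approx 1.38\times10^{-5}\approx1.4\times10^{-5}.
\]
As a cross-check, plugging $N=179$ into the exact formula of Lemma~\ref{lem:degenerate} gives essentially the same value.
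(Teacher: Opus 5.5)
Your proposal is correct and follows the same route the paper implicitly takes: substitute $N_x\approx\delta_{max}w$ and $N_y\approx\delta_{max}h$ into Lemma~\ref{lem:degenerate}, use $\frac{2N^2+1}{3N^3}\approx\frac{2}{3N}$, and evaluate to obtain $4/(9\times 32184)\approx 1.38\times10^{-5}$. Your discussion of the $\operatorname{int}$ truncation goes beyond the paper, which simply treats the offsets as uniform; the only slip is that truncation toward zero gives the extreme offsets $\pm 89$ a mass set by the fractional part of $\delta_{max}w/2=89.7$ (weight $0.7$ rather than $1$), not exactly half, and this does not affect your $O(1/N)$ relative-error claim.
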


Hence pure 4-point sampling almost surely yields a genuine 8-DOF transformation and rarely covers lower-DOF cases.
To improve the coverage across the perspective transformation hierarchy, we introduce a multi-DOF vertex sampling strategy.
Specifically, we define four transformation modes, where Mode~$k$ ($k \in \{4,3,2,1\}$) randomly samples $k$ independent vertices and deterministically infers the remaining $4{-}k$ corners to enforce the corresponding geometric constraint.
We first draw a mode from the categorical distribution $\boldsymbol{\pi} = (\pi_4, \pi_3, \pi_2, \pi_1)$, where $\pi_k$ denotes the probability of selecting Mode~$k$.
Concretely, Mode~4 (8-DOF) samples all four corners independently, yielding a general 8-DOF projective transformation.
Mode~3 (6-DOF) samples three corners independently and computes the fourth as the vertex that completes a parallelogram.
Mode~2 (4-DOF) samples two diagonally opposite corners and constructs the other two to form a rectangle.
Mode~1 (2-DOF) samples a single displacement vector and applies it uniformly to all four corners, reducing to a 2-DOF translation.

Regardless of the sampled mode, the process ultimately yields four corner endpoints, which allows us to formulate the unified perspective transformation function $\Phi(\cdot)$ as:
\begin{equation}
    \Phi(x; [h_{ij}]) = \Phi(x; \delta_{max}, {\pi}),
    \label{eq:perspective_transformation_function}
\end{equation}
where $[h_{ij}]$ is the derived perspective transformation matrix. Given an image $x$, once the four corners are determined by the distortion budget $\delta_{max}$ and the mode distribution ${\pi}$, we can obtain eight equations based on Equation~\ref{eq:perspective_transformation_homogenization}. By solving these equations, we compute the final transformation matrix $[h_{ij}]$.

\subsubsection{Perspective-Invariant Attack (PIA)}
Building upon the multi-DOF perspective transformation function $\Phi(\cdot)$, we propose our Perspective-Invariant Attack (PIA). The core idea is straightforward: by forcing the adversarial examples to remain misclassified across a wide range of geometric distortions, we prevent the perturbations from overfitting to the source model's specific feature alignments, thereby significantly improving cross-model transferability.

Specifically, we integrate $\Phi(\cdot)$ seamlessly into the standard momentum iterative framework (MI-FGSM). In each optimization step, we generate $N$ transformed copies of the current adversarial image $x^{\text{adv}}_{i}$ using our randomized $\Phi(\cdot)$. The gradients are then calculated and averaged over these geometric variants. Formally, Equation~\ref{eq:i_fgsm_grad} is reformulated as follows:
\begin{equation}
    g_{i+1} = \frac{1}{N} \sum_{j=1}^{N} \nabla_x \mathcal{L}(F(\Phi(x^{\text{adv}}_{i}; \delta_{max}, \pi)_j), y),
    \label{eq:pia_grad_pure}
\end{equation}
where $\Phi(\cdot)_j$ denotes the $j$-th randomly sampled perspective transformation instance. By solely replacing conventional augmentation techniques with our multi-DOF $\Phi(\cdot)$, PIA establishes a new state-of-the-art in transferability.

\subsubsection{PIA with Complementary Transformation Pool (PIA-Mix)}

While our proposed Perspective-Invariant Attack (PIA) generates highly transferable adversarial examples independently, its core mechanism explores the global geometric space.
This mechanism is inherently orthogonal to existing input transformations that focus on multi-scale spatial augmentations with local fusion (e.g., SID) or block-level spatial disruptions (e.g., SIA, BSR).

To seamlessly exploit the combinations of different transformations without introducing significant computational overhead, we propose a generic extension termed \textbf{PIA-Mix}, which maintains a \textit{Complementary Transformation Pool} $\mathcal{P}$. Rather than simply stacking augmentations, we treat $\mathcal{P}$ as a configurable distribution of distinct transformations. During the $N$-sample gradient averaging in each attack iteration, we sample transformations such that our perspective transformation $\Phi(\cdot)$ acts as the primary driver (e.g., selected with a 50\% probability), while the remaining slots are filled by randomly sampling from auxiliary methods in the pool. By keeping the total number of transformations $N$ strictly constant, PIA-Mix further improves adversarial transferability gains with minimal additional overhead.

Formally, we incorporate this pool-based strategy into the MI-FGSM framework, updating Equation~\ref{eq:pia_grad_pure} as follows:
\begin{equation}
    g_{i+1} = \frac{1}{N} \sum_{j=1}^{N} \nabla_x \mathcal{L}(F(\phi_j(x^{\text{adv}}_{i})), y), \quad \text{where } \phi_j \sim \mathcal{P},
    \label{eq:pia_grad}
\end{equation}
where $\mathcal{P}$ defines the sampling space comprising our $\Phi(\cdot)$ and other candidate methods. We sample $\phi_j$ from $\mathcal{P}$ according to a predefined probability distribution.

This generic formulation not only verifies that PIA fundamentally expands the attack surface beyond existing techniques, but also provides a flexible framework for future combinations. We provide detailed isolated studies and optimal combination analyses in our ablation experiments.

\subsubsection{PIA Algorithm}
We summarize the PIA-Mix algorithm in Algorithm~\ref{alg:pia}, and PIA can be seen as a special case of PIA-Mix when $\mathcal{P}$ only contains our perspective transformation $\Phi(\cdot)$.
In each iteration of PIA-Mix, we randomly transform the input using a set of transformation functions $\mathcal{P}$, including our perspective transformation $\Phi(\cdot)$, which has been overlooked in existing attack methods.
We sum the gradients of the transformed images to obtain the final gradient $g^\prime$.
Since $g^\prime$ will be normalized when calculating the momentum $g_{i+1}$, we can omit the averaging operation.
Finally, we update the intermediate adversarial example $x^{\text{adv}}_i$ using the sign of the momentum, and clip it to be $\ell_\infty$-bounded by $\epsilon$.

\begin{algorithm}[t]
    \algnewcommand\algorithmicinput{\textbf{Input:}}
    \algnewcommand\Input{\item[\algorithmicinput]}
    \algnewcommand\algorithmicoutput{\textbf{Output:}}
    \algnewcommand\Output{\item[\algorithmicoutput]}
    \caption{PIA-Mix Algorithm}
    \label{alg:pia}
	\begin{algorithmic}[1]
		\Input Surrogate model $F$, 
               clean image $x$, 
               true label $y$, iteration number $T$, perturbation budget $\epsilon$,
               step size $\eta$,
               decay factor $\mu$, height $h$, width $w$,
               number of transformed copies $N$ % per iteration
        \Input Maximum distortion budget $\delta_{max}$ and vertex mode distribution $\pi$ for our $\Phi(\cdot)$
        \Input Transformation pool $\mathcal{P}$ and their sampling probabilities
        \Output An adversarial example $x^{\text{adv}}$
        \State $x^{\text{adv}}_0=x, \tilde g_0=0$. 
		\For{$i = 0 \textbf{ to } T-1$}
            \State $g_{i+1}=0$ \Comment{Gradients of $N$ transformed images}
            \For{$j = 0 \textbf{ to } N-1$}
                \State Randomly select a function $\phi_j(\cdot)$ from $\mathcal{P}$
                \If{$\phi_j(\cdot)$ is our $\Phi(\cdot)$} \Comment{Get matrix $[h_{ij}]$}
                    \State Sample corner positions with $\delta_{max}$ and $\pi$
                    \State For each corner pair $(u,v)$ and $(u',v')$, obtain: 
                    \begin{equation}
                        \begin{aligned}
                        u' &= \frac{h_{11} u + h_{12} v + h_{13}}{h_{31} u + h_{32} v + 1}, \\
                        v' &= \frac{h_{21} u + h_{22} v + h_{23}}{h_{31} u + h_{32} v + 1},
                        \nonumber
                        \end{aligned}
                    \end{equation}
                    \State Collect eight equations and solve for $[h_{ij}]$
                \EndIf
            
                \State $g_{i+1} = g_{i+1} + \nabla_x \mathcal{L}(F(\phi_j(x^{\text{adv}}_i)), y)$
            \EndFor
            \State Calculate $\tilde g_{i+1} = \mu \cdot \tilde g_{i} + g_{i+1}/\| g_{i+1}\|_1$ \Comment{Momentum}
            \State Update example
		        $x^{\text{adv}}_{i+1} = x^{\text{adv}}_i - \eta \cdot \text{sign}(\tilde g_{i+1})$
            \State Clip $x^{\text{adv}}_{i+1}$ to be $\ell_{\infty}$-bounded by $\epsilon$
		\EndFor
        \State $x^{\text{adv}}=x^{\text{adv}}_T$
        \State \Return $x^{\text{adv}}$
	\end{algorithmic}
\end{algorithm}

\section{Experiments}

In this section, we first introduce the experimental setup in Section~\ref{sec:exp_setup}. 
We then systematically evaluate our methods by progressively answering the following questions. 
These questions are designed to assess PIA from multiple perspectives, 
ranging from standard transfer settings to more challenging scenarios 
such as defended models and multimodal systems:
(1) \textbf{Effectiveness} (Section~\ref{sec:exp_single}): 
Does PIA improve transferability over existing input transformation methods?
(2) \textbf{Compatibility} (Section~\ref{sec:exp_combine}): 
Can PIA be combined with other attack frameworks?
(3) \textbf{Robustness} (Section~\ref{sec:exp_defense}): 
Does PIA remain effective under defenses?
(4) \textbf{Generalization} (Section~\ref{sec:exp_generalization}): 
Does PIA generalize across datasets and multimodal systems?

\subsection{Experimental Setup}
\label{sec:exp_setup}

\textbf{Datasets.}
We evaluate our method on two widely-used benchmark datasets. Following previous work~\cite{DongLPS0HL18,wang2023structure,wang2024boosting}, the primary dataset is the ImageNet-compatible dataset from the NIPS 2017 adversarial attack challenge. It consists of 1,000 images from the ILSVRC 2012 dataset~\cite{russakovsky2015imagenet} specifically curated for adversarial attack evaluation. Additionally, we conduct experiments on CIFAR-10~\cite{krizhevsky2009learning}, where we construct a test set of 1,000 images by randomly sampling 100 images from each of the 10 classes.
The input image resolution is set to 299×299 for ImageNet and 32×32 for CIFAR-10.

\textbf{General Settings.}
We evaluate the transferability of adversarial examples in untargeted attack scenarios. We employ an $\ell_\infty$-norm bounded attack with perturbation budget $\epsilon=16/255$. The decay factor $\mu$ is set to 1.0, and step size $\eta$ is set to $2/255$. We run the iterative attack for 10 iterations ($T=10$). 
All methods utilize the negative cross-entropy loss function for optimizing the adversarial examples.

\textbf{Models.}
For the ImageNet-compatible dataset, we evaluate our method on eight pre-trained DNN models, comprising four CNN-based models and four transformer-based architectures. The CNN-based models include ResNet-50 (RN-50) \cite{He_2016_CVPR}, DenseNet-121 (DN-121) \cite{huang2017densely}, MobileNet-v2 (MB-v2) \cite{sandler2018mobilenetv2}, and Inception-v3 (Inc-v3) \cite{SzegedyVISW16}.
For transformer-based architectures, we employ Vision Transformer (ViT) \cite{dosovitskiy2020vit}, LeViT \cite{graham2021levit}, ConViT \cite{d2021convit}, and Pooling-based Vision Transformer (PiT) \cite{heo2021rethinking}. 
We also use three secured DNNs: Inc-v3$_{ens3}$ (an ensemble of three Inception-v3 networks), Inc-v3$_{ens4}$ (an ensemble of four Inception-v3 networks), and IR-v2$_{ens}$ (an ensemble of three IncRes-v2 networks). These models were adversarially trained and have been widely used in previous work~\cite{DongLPS0HL18,DongPSZ19,Wang_2021_admix}.
For the CIFAR-10 dataset, we primarily evaluate our method on Inc-v3, MB-v2, and RN-20. Given that CIFAR-10's small image dimensions are not well-suited for vision transformers, we only use CNN-based architectures.
Additionally, we incorporate two adversarially trained models for CIFAR-10: RN-20-ens-gal~\cite{kariyappa2019improving} and RN-20-ens-dv \cite{yang2020dverge}.

\textbf{Attacks.}
We comprehensively evaluate our method against existing transfer-based attack techniques: DI \cite{XieZZBWRY19}, TI \cite{DongPSZ19}, DeCoWA \cite{lin2024boosting}, SIA \cite{wang2023structure}, BSR \cite{wang2024boosting}, and SID \cite{zhou2025leveraging}.
We also evaluate the attack performance in the search-based frameworks: L2T \cite{zhu2024learning} and OPS \cite{guo2025boosting}.
We incorporate MI~\cite{DongLPS0HL18} as a fundamental baseline component across all attack methods. For clarity in presentation, we omit the `MI' prefix when referring to these methods. All experiments strictly follow the original parameter settings proposed in their respective papers.

\textbf{Evaluation Metric.}
We use the Attack Success Rate (ASR) as our primary evaluation metric. ASR is defined as the percentage of generated adversarial examples that are successfully misclassified by the target model (i.e., the predicted label differs from the ground-truth label).

\textbf{Implementation Details of Our Methods.}
We use $\delta_{max}=0.6$ and vertex mode distribution $\pi=(0.4, 0.3, 0.2, 0.1)$ for our PIA.
For the transformation pool $\mathcal{P}$ in PIA-Mix, we include PIA and SID with probabilities (0.5, 0.5).
The number of transformed copies $N$ is set to 20.

% -------------------- Table 1 : single method --------------------
\begin{table*}[!t]
    \centering
    \resizebox{0.75\textwidth}{!}{%
    \begin{tabular}{clccccccccc}
    \toprule[0.15em]
    Source & Attack & 
    RN-50 & DN-121 & Inc-v3 & MB-v2 & ViT & LeViT & ConViT & PiT & Average
    \\ \midrule
    \multirow{8}{*}{RN-50} 
    & DI         &\textbf{100} & 97.0 & 76.1 & 95.3 & 29.7 & 32.4 & 19.8 & 29.8 & 60.0 \\
    & TI         &\textbf{100} & 90.5 & 59.8 & 87.2 & 31.6 & 29.0 & 18.9 & 25.7 & 55.3 \\
    & DeCoWA     &\textbf{100} & 99.7 & 95.9 & 99.4 & 53.3 & 68.6 & 41.1 & 59.2 & 77.2 \\
    & SIA        &\textbf{100} & \textbf{99.9} & 96.2 & 99.7 & 45.6 & 68.9 & 39.0 & 59.0 & 76.0 \\
    & BSR        &\textbf{100} & \textbf{99.9} & 95.5 & 99.6 & 48.8 & 71.6 & 40.6 & 60.7 & 77.1 \\
    & SID        &\textbf{100} & \textbf{99.9} & 97.9 & 99.6 & 58.1 & 82.1 & 52.8 & 67.4 & 82.2 \\
    \cmidrule(lr){2-11}
    & PIA (Ours)     & \textbf{100} & \textbf{99.9} & 98.1 & \textbf{99.8} & 60.0 & 81.9 & 54.3 & 69.3 & 82.9 \\
    & PIA-Mix (Ours) & \textbf{100} & \textbf{99.9} & \textbf{98.5} & \textbf{99.8} & \textbf{64.5} & \textbf{86.9} & \textbf{58.5} & \textbf{76.2} & \textbf{85.5} \\

    \midrule
    \multirow{8}{*}{DN-121} 
    & DI         &96.7 & \textbf{100} & 77.5 & 93.2 & 32.2 & 38.5 & 23.2 & 34.3 & 61.9 \\
    & TI         &90.5 & \textbf{100} & 62.9 & 86.4 & 33.9 & 35.1 & 22.2 & 33.3 & 58.0 \\
    & DeCoWA     &99.5 & \textbf{100} & 95.7 & 99.0 & 52.9 & 69.3 & 43.3 & 64.0 & 78.0 \\
    & SIA        &\textbf{99.7} & \textbf{100} & 96.6 & 99.2 & 50.5 & 73.1 & 42.9 & 65.5 & 78.4 \\
    & BSR        &99.5 & \textbf{100} & 95.0 & 98.9 & 49.6 & 71.8 & 41.7 & 63.7 & 77.5 \\
    & SID        &99.5 & \textbf{100} & 97.4 & 99.3 & 60.0 & 79.3 & 49.7 & 69.0 & 81.8 \\
    \cmidrule(lr){2-11}
    & PIA (Ours)  & 99.6 & \textbf{100} & 97.9 & 99.3 & 61.6 & 80.7 & 49.5 & 71.0 & 82.5 \\
    & PIA-Mix (Ours) & \textbf{99.7} & \textbf{100} & \textbf{98.3} & \textbf{99.4} & \textbf{64.2} & \textbf{83.8} & \textbf{58.0} & \textbf{77.2} & \textbf{85.1} \\

    \midrule
    \multirow{8}{*}{Inc-v3} 
    & DI         &74.0 & 74.7 & 99.6 & 79.5 & 29.5 & 30.8 & 20.8 & 29.3 & 54.8 \\
    & TI         &59.3 & 61.7 & \textbf{100} & 67.8 & 30.1 & 25.7 & 19.1 & 26.8 & 48.8 \\
    & DeCoWA    & 93.7 & 96.3 & \textbf{100} & 96.1 & 46.4 & 64.7 & 41.8 & 60.2 & 74.9 \\
    & SIA        &94.8 & 96.7 & \textbf{100} & 96.4 & 47.6 & 67.7 & 42.7 & 62.4 & 76.0 \\
    & BSR        &91.8 & 95.8 & \textbf{100} & 94.0 & 44.8 & 58.8 & 38.8 & 57.4 & 72.7 \\
    & SID  & 95.2 & 97.6 & \textbf{100} & 96.5 & 52.0 & 70.1 & 45.4 & 66.6 & 77.9 \\
    \cmidrule(lr){2-11}
    & PIA (Ours) &  95.3 & 97.3 & \textbf{100} & 96.9 & 54.5 & 71.4 & 47.7 & 68.4 & 78.9 \\
    & PIA-Mix (Ours) & \textbf{96.0} & \textbf{97.8} & \textbf{100} & \textbf{97.2} & \textbf{56.8} & \textbf{74.1} & \textbf{50.7} & \textbf{69.9} & \textbf{80.3} \\

    \midrule
    \multirow{8}{*}{MB-v2} 
    & DI         &89.8 & 88.3 & 70.0 & \textbf{100} & 28.2 & 34.0 & 17.9 & 30.1 & 57.3 \\
    & TI         &78.4 & 75.8 & 53.1 & \textbf{100} & 28.5 & 29.0 & 18.8 & 24.5 & 51.0 \\
    & DeCoWA & 99.0 & 99.2 & 94.8 & \textbf{100} & 48.6 & 73.4 & 40.8 & 65.4 & 77.7 \\
    & SIA        &99.2 & 98.8 & 90.2 & \textbf{100} & 41.1 & 65.7 & 32.6 & 53.0 & 72.6 \\
    & BSR        &99.1 & 99.3 & 95.2 & \textbf{100} & 47.2 & 73.6 & 38.7 & 62.5 & 77.0 \\
    & SID   & 99.3 & \textbf{99.5} & 95.4 & \textbf{100} & 55.2 & 78.0 & 46.2 & 71.4 & 80.6 \\
    \cmidrule(lr){2-11}
    & PIA (Ours) & 99.2 & \textbf{99.5} & 95.6 & \textbf{100} & 54.9 & 79.3 & 46.0 & 70.9 & 80.7 \\
    & PIA-Mix (Ours) & \textbf{99.5} & \textbf{99.5} & \textbf{96.8} & \textbf{100} & \textbf{57.6} & \textbf{84.4} & \textbf{50.8} & \textbf{73.3} & \textbf{82.7} \\

    \bottomrule[0.15em]
    \end{tabular}%
    }
    \caption{Attack success rates (\%) against eight target models on the ImageNet-compatible dataset.
    All methods are based on MI.
    The best results are shown in bold.}
    \label{tab:sole}
\end{table*}

\subsection{Comparison with Input Transformation Methods}
\label{sec:exp_single}

Table~\ref{tab:sole} shows the attack success rates of our methods against eight target models on the ImageNet-compatible dataset.
We use different surrogate models, including RN-50, DN-121, Inc-v3, and MB-v2.
We compare PIA and PIA-Mix with existing input-transformation-based attack methods: DI, TI, DeCoWA, SIA, BSR, and SID.
The framework of all methods is based on MI-FGSM, which is a widely used framework for adversarial attacks.
Our PIA and PIA-Mix consistently achieve the highest average attack success rates across all four surrogate model settings.
For example, when using RN-50 as the surrogate model, our PIA-Mix achieves an average attack success rate of $85.5\%$, which is $3.3\%$ higher than the next best method, SID ($82.2\%$).

Notably, our PIA involves only perspective transformation, an effective geometric transformation that has been overlooked in existing input-transformation-based attacks.
Despite using only a single type of transformation, PIA alone outperforms all existing methods across all surrogate settings.
Compared with BSR and SIA, two block-wise shuffling-based methods that also operate in the pixel space, PIA shows clear advantages. When using RN-50 as the surrogate model, PIA achieves an average ASR of $82.9\%$, compared to $76.0\%$ for SIA and $77.1\%$ for BSR, demonstrating the superiority of perspective transformation over block-wise shuffling.
Compared with SID, a concurrent method that leverages multi-scale resizing and local image fusion, PIA achieves slightly higher ASRs (e.g., $82.9\%$ vs.\ $82.2\%$ on RN-50, $82.5\%$ vs.\ $81.8\%$ on DN-121) using only a single type of geometric transformation, highlighting the strong potential of perspective transformation.

Furthermore, PIA-Mix, which combines PIA with SID via random selection, consistently outperforms both PIA and SID by notable margins (e.g., $85.5\%$ vs.\ $82.9\%$ and $82.2\%$ on RN-50).
This indicates that the diversity introduced by perspective transformation is complementary to SID's multi-scale spatial transformations, and the two can be effectively combined for stronger transferability.

% -------------------- Table 2 : DI-TI- --------------------
\begin{table*}[!t]
    \centering
    \resizebox{0.81\textwidth}{!}{%
    \begin{tabular}{c|ccccccccccc}
    \toprule[0.15em]
    Source &
    Category & Attack & 
    RN-50 & DN-121 & Inc-v3 & MB-v2 & ViT & LeViT & ConViT & PiT & Average
    \\ \midrule
    \multirow{12}{*}{RN-50} &
    \multirow{7}{*}{\shortstack[c]{Fixed\\Transformation\\Combination}} 
    &   MI & \textbf{100} & 84.4 & 53.0 & 82.7 & 21.1 & 22.0 & 13.7 & 19.1 & 49.5 \\
    & & MI-PIA (Ours) & \textbf{100} & 98.6 & 78.2 & 96.3 & 34.5 & 35.8 & 26.1 & 36.4 & 63.2 \\
    \cmidrule(lr){3-12}
    & & DI-TI-SIA  & \textbf{100} & \textbf{99.9} & 97.0 & 99.7 & 59.2 & 81.2 & 49.1 & 69.7 & 82.0 \\
    & & DI-TI-BSR  & \textbf{100} & \textbf{99.9} & 96.3 & \textbf{99.8} & 58.8 & 78.3 & 48.2 & 68.2 & 81.2 \\
    & & DI-TI-SID  & \textbf{100} & 99.8 & 98.0 & \textbf{99.8} & 66.6 & 84.8 & 57.2 & 76.9 & 85.4    \\
    \cmidrule(lr){3-12}
    && DI-TI-PIA (Ours) & \textbf{100} & \textbf{99.9} & \textbf{98.6} & 99.7 & 68.4 & 84.3 & 59.1 & 77.0 & 85.9 \\
    & & DI-TI-PIA-Mix (Ours) & \textbf{100} & \textbf{99.9} & 98.5 & \textbf{99.8} & \textbf{72.7} & \textbf{87.6} & \textbf{64.9} & \textbf{79.6} & \textbf{87.9} \\

    \cmidrule(lr){2-12}
    & \multirow{4}{*}{\shortstack[c]{Search-based\\Transformation\\Combination}} 
     & L2T & \textbf{100} & \textbf{99.9} & 98.9 & \textbf{99.8} & 73.6 & 89.3 & 76.5 & 88.8 & 90.9 \\
    & & L2T + PIA (Ours) & \textbf{100} & \textbf{99.9} & 99.1 & 99.7 & 77.1 & 90.7 & 79.2 & 91.1 & 92.1 \\
    \cmidrule(lr){3-12}
    & & OPS & \textbf{100} & 99.8 & 99.3 & \textbf{99.8} & 79.5 & 92.2 & 82.7 & 92.2 & 93.2 \\
    & & OPS + PIA (Ours) & \textbf{100} & \textbf{99.9} & \textbf{99.5} & \textbf{99.8} & \textbf{82.3} & \textbf{94.4} & \textbf{83.6} & \textbf{93.8} & \textbf{94.2} \\

    \midrule
    \multirow{12}{*}{DN-121} &
    \multirow{7}{*}{\shortstack[c]{Fixed\\Transformation\\Combination}} 
    &  MI & 86.8 & \textbf{100} & 55.7 & 83.2 & 25.4 & 27.2 & 15.1 & 24.3 & 52.2 \\
    & & MI-PIA (Ours) & 97.6 & \textbf{100} & 79.1 & 95.3 & 36.6 & 43.9 & 26.9 & 38.8 & 64.8 \\
    \cmidrule(lr){3-12}
    & & DI-TI-SIA  & 99.6 & \textbf{100} & 97.2 & 99.1 & 60.3 & 81.3 & 51.7 & 73.6 & 82.9 \\
    & & DI-TI-BSR  & 99.5 & \textbf{100} & 95.2 & 98.5 & 56.5 & 75.0 & 46.0 & 68.4 & 79.9 \\
    & & DI-TI-SID & 99.7 & \textbf{100} & 98.0 & 99.0 & 65.6 & 83.0 & 58.1 & 76.6 & 85.0 \\
    \cmidrule(lr){3-12}
    & & DI-TI-PIA (Ours)  & \textbf{99.8} & \textbf{100} & \textbf{98.4} & 99.2 & 67.2 & 83.3 & 60.6 & 76.1 & 85.6 \\
    & & DI-TI-PIA-Mix (Ours) & 99.6 & \textbf{100} & \textbf{98.4} & \textbf{99.3} & \textbf{70.1} & \textbf{86.5} & \textbf{62.4} & \textbf{80.5} & \textbf{87.1} \\

    \cmidrule(lr){2-12}
    & \multirow{4}{*}{\shortstack[c]{Search-based\\Transformation\\Combination}} 
     & L2T & \textbf{99.9} & \textbf{100} & 98.7 & \textbf{99.9} & 73.1 & 88.4 & 75.6 & 88.1 & 90.5 \\
    & & L2T + PIA (Ours) & \textbf{99.9} & \textbf{100} & 99.0 & \textbf{99.9} & 75.0 & 90.7 & 78.5 & 89.0 & 91.5 \\
    \cmidrule(lr){3-12}
    & & OPS & \textbf{99.9} & \textbf{100} & \textbf{99.2} & \textbf{99.9} & 78.2 & 92.4 & 81.6 & 91.8 & 92.9 \\
    & & OPS + PIA (Ours) & \textbf{99.9} & \textbf{100} & \textbf{99.2} & \textbf{99.9} & \textbf{81.3} & \textbf{93.2} & \textbf{83.4} & \textbf{93.0} & \textbf{93.7} \\

    \bottomrule[0.15em]

    \end{tabular}%
    }
    \caption{Attack success rates (\%) of combining PIA with existing methods under both fixed transformation (DI-TI) and search-based transformation (L2T, OPS) frameworks against eight target models on the ImageNet-compatible dataset.
    The best results within each subgroup are shown in bold.}
    \label{tab:combine}
\end{table*}

% -------------------- Table 3 : ensemble --------------------
\begin{table*}[htbp]
    \centering
    \resizebox{0.68\textwidth}{!}{%
    \begin{tabular}{lccccccccc}
    \toprule[0.15em]
    Attack & 
    RN-50* & DN-121*  & Inc-v3* & MB-v2* & ViT & LeViT & ConViT & PiT & Average
    \\ \midrule
     DI         & \textbf{100}* & \textbf{100}* & 99.6* & \textbf{100}* & 41.5 & 71.6 & 40.6 & 62.1 & 76.9 \\
     TI         & \textbf{100}* & \textbf{100}* & 99.7* & \textbf{100}* & 42.0 & 59.6 & 38.1 & 53.4 & 74.1 \\
     SIA        & \textbf{100}* & \textbf{100}* & \textbf{100}* &  \textbf{100}* & 68.7 & 94.3 & 69.6 & 89.1 & 90.2 \\
     BSR        & \textbf{100}* & \textbf{100}* & 99.9* & \textbf{100}* & 68.3 & 93.6 & 67.8 & 85.6 & 89.4 \\
     SID        & \textbf{100}* & \textbf{100}* & \textbf{100}* & \textbf{100}* & 77.6 & 95.7 & 76.1 & 92.9 & 92.8 \\
     \cmidrule(lr){1-10}
     PIA (Ours) & \textbf{100}* & \textbf{100}* & \textbf{100}* & \textbf{100}* & 78.8 & 95.5 & 77.7 & 93.4 & 93.2 \\
     PIA-Mix (Ours) & \textbf{100}* & \textbf{100}* & \textbf{100}* & \textbf{100}* & \textbf{80.1} & \textbf{97.3} & \textbf{80.9} & \textbf{94.1} & \textbf{94.1} \\

    \bottomrule[0.15em]
    \end{tabular}%
    }
    \caption{Attack success rates (\%) using an ensemble of models as the surrogate model on the ImageNet-compatible dataset.
    All methods are based on MI.
    * indicates the surrogate models.
    The best results are shown in bold.
    }
    \label{tab:ensemble}
\end{table*}

\subsection{Combining with Other Attack Methods}
\label{sec:exp_combine}

Table~\ref{tab:combine} further evaluates the compatibility of PIA with existing attack frameworks.
We consider two categories of combination strategies: fixed transformation combinations and search-based transformation combinations.

\textbf{Fixed Transformation Combination.}
We combine PIA and PIA-Mix with the widely used DI-TI framework.
As shown in Table~\ref{tab:combine}, DI-TI-PIA-Mix achieves the highest average ASR among all fixed transformation combinations across both surrogate settings.
For example, with RN-50 as the surrogate model, DI-TI-PIA-Mix achieves an average ASR of $87.9\%$, outperforming DI-TI-SID ($85.4\%$) by $2.5\%$ and DI-TI-SIA ($82.0\%$) by $5.9\%$.
With DN-121 as the surrogate model, DI-TI-PIA-Mix achieves $87.1\%$, surpassing DI-TI-SID ($85.0\%$) by $2.1\%$.
These results confirm that PIA is compatible with existing attack methods, and can be integrated into common attack pipelines for further improvement.

\textbf{Search-based Transformation Combination.}
We also evaluate PIA within search-based transformation frameworks, including L2T~\cite{zhu2024learning} and OPS~\cite{guo2025boosting}.
Unlike fixed combinations where the transformation pipeline is pre-defined, L2T and OPS search for optimal transformation combinations from a group of candidate transformations during the attack process.
This search procedure incurs substantially higher computational costs, typically over $5\times$ that of fixed combination methods.
Note that PIA, as a fixed transformation method, is fundamentally different from these search-based approaches.
To evaluate whether our perspective transformation can provide complementary diversity even within these sophisticated frameworks, we add PIA to their candidate pools (denoted as L2T + PIA and OPS + PIA) without modifying other configurations.
As shown in the lower part of Table~\ref{tab:combine}, incorporating PIA consistently improves both L2T and OPS across both surrogate settings.
With RN-50 as the surrogate, L2T + PIA achieves $92.1\%$ compared to $90.9\%$ for L2T ($+1.2\%$), and OPS + PIA achieves $94.2\%$ compared to $93.2\%$ for OPS ($+1.0\%$).
Similar improvements are observed with DN-121 ($91.5\%$ vs.\ $90.5\%$ for L2T, $93.7\%$ vs.\ $92.9\%$ for OPS).
This demonstrates that perspective transformation introduces diversity that is complementary to existing transformations, and PIA can serve as a plug-in component to enhance both fixed and search-based attack frameworks without significant additional computational cost (see Table~\ref{tab:time}).

\textbf{Combining with Ensemble-based Attack Framework.}
Table~\ref{tab:ensemble} shows the attack success rates using an ensemble-based attack framework~\cite{LiuCLS17}, where we use the ensemble of RN-50, DN-121, Inc-v3, and MB-v2 as the surrogate models.
Our PIA-Mix achieves the highest average ASR of $94.1\%$ against all eight target models, outperforming SID ($92.8\%$) by $1.3\%$.
PIA alone also surpasses all existing methods with an average ASR of $93.2\%$.
These results further confirm the effectiveness and generality of our methods across different attack frameworks.

\textbf{Computational Efficiency.}
Table~\ref{tab:time} reports the average time for generating one adversarial example on the ImageNet-compatible dataset using an RTX 3090 GPU.
Among fixed transformation methods, PIA requires only $0.739$ seconds per image, which is comparable to BSR ($0.729$s), faster than SIA ($0.823$s), and notably more efficient than SID ($1.217$s).
PIA-Mix requires $0.962$ seconds, still faster than SID.
For search-based methods, L2T and OPS require $8.127$s and $6.311$s per image respectively, as they need to search over candidate transformations during the attack.
Adding PIA to their candidate pools introduces negligible overhead (L2T + PIA: $8.184$s, OPS + PIA: $6.373$s), confirming that PIA can be integrated as a plug-in component with only marginal additional cost.

% -------------------- Table : time --------------------
\begin{table}[htbp]
    \centering
    \resizebox{0.9\linewidth}{!}{%
    \begin{tabular}{llccc}
    \toprule[0.15em]

    \multirow{2}{*}{Category} &
    \multirow{2}{*}{Attack} &
    \multirow{2}{*}{
        \begin{tabular}[c]{@{}c@{}}     
    Copy\\ Number
    \end{tabular}
    } &
    \multirow{2}{*}{
        \begin{tabular}[c]{@{}c@{}}     
    Iteration\\ Number
    \end{tabular}
    } &
    \multirow{2}{*}{
        \begin{tabular}[c]{@{}c@{}}     
    Average\\ Time (s)
    \end{tabular}
    } 
    \\ \\

    \midrule

    \multirow{5}{*}{\shortstack[c]{Fixed\\Transformation}} 
     & SIA      & 20 & 10 & 0.823  \\
     & BSR      & 20 & 10 & 0.729  \\
     & SID      & 20 & 10 & 1.217  \\
    \cmidrule(lr){2-5}
     & PIA (Ours) & 20 & 10 & 0.739  \\
     & PIA-Mix (Ours)  & 20 & 10 & 0.962    \\
     \midrule
     \multirow{4}{*}{\shortstack[c]{Search-based\\Transformation\\Combination}}
     & L2T & 225 & 10 & 8.127 \\
     & L2T + PIA (Ours) & 225 & 10 & 8.184\\
     \cmidrule(lr){2-5}
     & OPS & 200 & 10 & 6.311 \\
     & OPS + PIA (Ours) & 200 & 10 & 6.373 \\

    \bottomrule[0.15em]
    \end{tabular}%
    }
    \caption{
        Average time (s) for each attack to generate an adversarial example on the ImageNet dataset.
        The surrogate model is RN-50.
        Note that for each attack iteration in L2T, the copy number is variable and capped at 225.
    }
    \label{tab:time}
\end{table}

% -------------------- Table 4 : defense --------------------
\begin{table*}[!t]
    \centering
    \resizebox{0.9\textwidth}{!}{%
    \begin{tabular}{clccccccccccccc}
    \toprule[0.15em]

    \multirow{2}{*}{Source}
    & \multirow{2}{*}{Attack} 
    & \multirow{2}{*}{HGD} 
    & \multirow{2}{*}{AT} 
    & \multirow{2}{*}{RS} 
    & \multicolumn{3}{c}{Protected Model: ViT} 
    & \multicolumn{3}{c}{Protected Model: ConViT} 
    & \multirow{2}{*}{Inc-v3$_{\text {ens3}}$} 
    & \multirow{2}{*}{Inc-v3$_{\text {ens4}}$}
    & \multirow{2}{*}{IR-v2$_{\text {ens}}$}
    & \multirow{2}{*}{Average}
    \\
    \cmidrule(lr){6-8}
    \cmidrule(lr){9-11}
    
    &&&&
    & HGD & NRP & DiffPure
    & HGD & NRP & DiffPure
    \\

    \midrule
    \multirow{5}{*}{RN-50} 
    & SIA          & 63.5 & 41.0 & 37.2 & 29.6 & 36.9 & 24.5 & 25.1 & 25.6 & 19.9 & 46.0 & 42.3 & 23.2 & 34.6 \\
    & BSR          & 63.7 & 40.9 & 36.0 & 34.0 & 37.3 & 26.0 & 29.1 & 27.9 & 22.9 & 50.4 & 45.3 & 27.9 & 36.8 \\
    & SID          & 73.7 & 42.2 & 41.6 & 34.3 & 42.3 & 31.0 & 33.7 & 32.0 & 28.4 & 58.1 & 54.5 & 33.2 & 42.1 \\
    \cmidrule(lr){2-15}
    & PIA (Ours) & 74.2 & 42.4 & 40.1 & 36.8 & 42.5 & 30.2 & 35.9 & 34.1 & 27.7 & 59.0 & 55.8 & 35.8 & 42.9 \\
    & PIA-Mix (Ours) & \textbf{76.4} & \textbf{43.5} & \textbf{42.2} & \textbf{38.8} & \textbf{43.0} & \textbf{32.9} & \textbf{38.2} & \textbf{35.8} & \textbf{29.9} & \textbf{65.3} & \textbf{59.8} & \textbf{39.4} & \textbf{45.4} \\

    \midrule
    \multirow{5}{*}{DN-121} 

    & SIA          & 55.9 & 40.7 & 36.0 & 31.5 & 37.1 & 26.3 & 26.2 & 26.1 & 22.5 & 47.3 & 43.6 & 24.2 & 34.8  \\
    & BSR          & 58.9 & 40.5 & 35.9 & 35.3 & 36.7 & 26.7 & 31.9 & 25.5 & 22.8 & 50.7 & 46.5 & 28.0 & 36.6 \\
    & SID          & 71.1 & 41.0 & 41.5 & 32.5 & 42.8 & 32.9 & 32.8 & 33.1 & 29.6 & 60.8 & 56.5 & 37.1 & 42.6 \\
    \cmidrule(lr){2-15}
    & PIA (Ours) & 70.2 & 41.8 & 40.7 & 35.8 & \textbf{43.3} & \textbf{33.8} & 34.5 & 33.3 & 30.4 & 61.4 & 56.2 & 38.1 & 43.3 \\
    & PIA-Mix (Ours) & \textbf{73.8} & \textbf{43.1} & \textbf{42.5} & \textbf{41.2} & 42.8 & 33.5 & \textbf{38.3} & \textbf{33.7} & \textbf{30.9} & \textbf{66.9} & \textbf{61.1} & \textbf{40.6} & \textbf{45.7} \\

    \bottomrule[0.15em]
    \end{tabular}%
    }
    \caption{Attack success rates (\%) against various defense methods on the ImageNet-compatible dataset.
    All methods are based on MI.
    The best results are shown in bold.
    }
    \label{tab:defense}
\end{table*}

\subsection{Evaluation under Defense Mechanisms}
\label{sec:exp_defense}

Table~\ref{tab:defense} shows the attack success rates against various defense mechanisms on the ImageNet-compatible dataset.
We evaluate on a comprehensive set of 12 defense configurations, including three standalone defenses (HGD~\cite{liao2018defense}, AT~\cite{MadryMSTV18}, RS~\cite{cohen2019certified}), purification-based defenses (HGD, NRP~\cite{naseer2020self}, DiffPure~\cite{nie2022DiffPure}) applied to protect ViT and ConViT (the most difficult models to attack in Table~\ref{tab:sole}), and three ensemble-based adversarially trained models (Inc-v3$_{\text{ens3}}$, Inc-v3$_{\text{ens4}}$, IR-v2$_{\text{ens}}$).

Our PIA-Mix achieves the highest average ASR across all defense configurations in both surrogate settings.
With RN-50 as the surrogate model, PIA-Mix achieves an average ASR of $45.4\%$, outperforming SID ($42.1\%$) by $3.3\%$ and BSR ($36.8\%$) by $8.6\%$.
With DN-121 as the surrogate model, PIA-Mix achieves $45.7\%$, surpassing SID ($42.6\%$) by $3.1\%$.
PIA alone also outperforms all existing methods (e.g., $42.9\%$ vs.\ $42.1\%$ for SID on RN-50), further confirming the effectiveness of perspective transformation even under strong defense mechanisms.

\subsection{Generalization and Additional Analysis}
\label{sec:exp_generalization}

\textbf{Results on CIFAR-10.}
Table~\ref{tab:cifar} shows the attack success rates on the CIFAR-10 dataset, evaluating whether our methods generalize beyond ImageNet.
Notably, all hyperparameters of our methods, including the maximum distortion budget $\delta_{max}$, are kept exactly the same as in the ImageNet experiments.
We use three surrogate settings: Inc-v3, MB-v2, and their ensemble (Ens), and evaluate on four target models, including two adversarially trained models (RN-20-ens-gal and RN-20-ens-dv).
Our PIA and PIA-Mix consistently achieve the highest average ASRs across all three surrogate settings.
With Inc-v3 as the surrogate, PIA-Mix achieves an average ASR of $87.3\%$, outperforming SIA ($82.9\%$) by $4.4\%$ and SID ($81.9\%$) by $5.4\%$.
With the ensemble surrogate, PIA-Mix further achieves $91.2\%$, surpassing both SIA and SID ($87.1\%$) by $4.1\%$.
Notably, our methods demonstrate particularly strong advantages against adversarially trained models.
With Inc-v3 as the surrogate, PIA-Mix achieves ASRs of $74.6\%$ and $75.0\%$ against RN-20-ens-gal and RN-20-ens-dv, surpassing SIA ($65.6\%$ and $66.4\%$) by $9.0\%$ and $8.6\%$, respectively.
These results confirm that our methods generalize well across different datasets.

%
% -------------------- Table : CIFAR 10 --------------------
\begin{table}[!t]
    \centering
    \resizebox{0.49\textwidth}{!}{%
    \begin{tabular}{clccccc}
    \toprule[0.15em]

    \multirow{2}{*}{Source} & 
    \multirow{2}{*}{Attack} 
    & \multirow{2}{*}{
        \begin{tabular}[c]{@{}c@{}}
        Inc-v3
        \end{tabular}
        } 
    & \multirow{2}{*}{
        \begin{tabular}[c]{@{}c@{}}
         MB-v2
        \end{tabular}
        } 
    & \multirow{2}{*}{
        \begin{tabular}[c]{@{}c@{}}
        RN-20-\\ens-gal
        \end{tabular}
        } 
    & \multirow{2}{*}{
        \begin{tabular}[c]{@{}c@{}}
        RN-20-\\ens-dv
        \end{tabular}
        } 
    & \multirow{2}{*}{
        \begin{tabular}[c]{@{}c@{}}
        Avg
        \end{tabular}
        } 
    \\ \\

    \midrule
    \multirow{8}{*}{Inc-v3} 

    & DI         & 88.8 & 78.5 & 34.4 & 33.5 & 58.8 \\
    & TI         & 86.0 & 77.3 & 37.6 & 35.7 & 59.2 \\
    & SIA        & \textbf{100} & 99.6 & 65.6 & 66.4 & 82.9 \\
    & BSR        & \textbf{100} & 96.0 & 63.2 & 62.6 & 80.5 \\
    & SID & \textbf{100} & 97.0 & 65.0 & 65.7 & 81.9 \\
     \cmidrule(lr){2-7}
    & PIA (Ours) & \textbf{100} & 98.6 & 69.6 & 70.9 & 84.8 \\
    & PIA-Mix (Ours) & \textbf{100} & \textbf{99.7} & \textbf{74.6} & \textbf{75.0} & \textbf{87.3} \\
    
    \midrule
    \multirow{8}{*}{MB-v2} 

    & DI         & 99.4 & \textbf{100} & 46.7 & 50.1 & 74.1 \\
    & TI         & 97.9 & \textbf{100} & 49.4 & 49.7 & 74.2 \\
    & SIA        & \textbf{100} &  \textbf{100} & 60.2 & 63.4 & 80.9 \\
    & BSR        & 99.5 & \textbf{100} & 58.6 & 59.5 & 79.4 \\
    & SID & 99.7 & \textbf{100} & 61.8 & 63.4 & 81.2 \\
    \cmidrule(lr){2-7}
    & PIA (Ours) & 99.7 & \textbf{100} & 64.5 & 68.5 & 83.2 \\
    & PIA-Mix (Ours) & \textbf{100} & \textbf{100} & \textbf{69.0} & \textbf{70.8} & \textbf{85.0} \\

     \midrule
     \multirow{8}{*}{Ens} 

     & DI         & 96.8 & 99.2 & 48.1 & 52.5 & 74.2 \\
     & TI         & 88.3 & 95.6 & 50.1 & 50.2 & 71.0 \\
     & SIA        & \textbf{100} & \textbf{100} & 73.6 & 74.8 & 87.1 \\
     & BSR        & \textbf{100} & \textbf{100} & 69.3 & 74.3 & 85.9 \\
     & SID & \textbf{100} & \textbf{100} & 73.1 & 75.2 & 87.1 \\
     \cmidrule(lr){2-7}
    & PIA (Ours) & \textbf{100} & \textbf{100} & 77.3 & 77.6 & 88.7 \\
    & PIA-Mix (Ours) & \textbf{100} & \textbf{100} & \textbf{83.6} & \textbf{81.1} & \textbf{91.2} \\ 

    \bottomrule[0.15em]
    \end{tabular}%
    }
    \caption{Attack success rates (\%) against four target models on the CIFAR-10 dataset.
    All methods are based on MI.
    The best results are shown in bold.
    }
    \label{tab:cifar}
\end{table}

\textbf{Attacking Multimodal LLMs.}
Table~\ref{tab:MLLMs} evaluates the attack effectiveness against multimodal large language models (MLLMs).
We randomly sample 100 images from the ImageNet-compatible dataset and use ResNet-50 as the surrogate model.
We evaluate three commercial MLLMs: Qwen2-VL, Claude-3.5, and GPT-4o.
To comprehensively assess adversarial transferability, we adopt two evaluation protocols.
The first is a \textit{closed-ended} protocol, where we prompt the MLLM with ``Is this image a photo of \{true label\}? Yes or No?'' and consider a ``No'' response as a successful attack.
The second is an \textit{open-ended} protocol designed to address the concern that adversarial examples may only be effective under constrained binary questions.
Specifically, we prompt the MLLM with ``What is the main object in this image?'' without revealing the true label, and employ a separate LLM judge (Claude-3.5) to determine whether the model's free-form response correctly identifies the true label or any of its synonyms; if not, the attack is considered successful.

As shown in Table~\ref{tab:MLLMs}, our methods achieve the highest ASRs under both evaluation protocols.
Under the basic MI-based framework, PIA-Mix achieves an average ASR of $66.0\%$ / $75.0\%$ (closed / open), outperforming SID ($59.3\%$ / $66.0\%$) by $6.7\%$ / $9.0\%$.
When combined with DI-TI, PIA-Mix further improves to $69.6\%$ / $79.3\%$, surpassing DI-TI-SID ($64.6\%$ / $72.3\%$) by $5.0\%$ / $7.0\%$.
Importantly, our methods maintain clear advantages under the open-ended protocol, where the MLLM must freely describe the image content without any label hint.
This demonstrates that the adversarial transferability of PIA is not limited to constrained binary questions, but extends to more general and challenging evaluation settings.

% - - - - - - - - - - exp on multimodal LLMs
\begin{table}[htbp]
    \centering
    \resizebox{0.95\linewidth}{!}{%
    \begin{tabular}{lcccc}
    \toprule[0.15em]
    Attack & Qwen2-VL & Claude-3.5 & GPT-4o & Average \\
    \midrule
    SIA            & 83 / 70 & 49 / 68 & 40 / 54 & 57.3 / 64.0 \\
    BSR            & 80 / 69 & 49 / 66 & 38 / 51 & 55.6 / 62.0 \\
    SID            & 85 / 72 & 54 / 72 & 39 / 54 & 59.3 / 66.0 \\
    PIA (Ours)     & 87 / 73 & 56 / 76 & 43 / 58 & 62.0 / 69.0 \\
    PIA-Mix (Ours) & \textbf{89} / \textbf{79} & \textbf{59} / \textbf{79} & \textbf{50} / \textbf{67} & \textbf{66.0} / \textbf{75.0} \\
    \midrule
    DI-TI-SIA            & 86 / 73 & 52 / 71 & 51 / 67 & 63.0 / 70.3 \\
    DI-TI-BSR            & 81 / 71 & 49 / 67 & 47 / 64 & 59.0 / 67.3 \\
    DI-TI-SID            & 86 / 74 & 57 / 77 & 51 / 66 & 64.6 / 72.3 \\
    DI-TI-PIA (Ours)     & 89 / 78 & 58 / 77 & 53 / 70 & 66.6 / 75.0 \\
    DI-TI-PIA-Mix (Ours) & \textbf{91} / \textbf{81} & \textbf{61} / \textbf{82} & \textbf{57} / \textbf{75} & \textbf{69.6} / \textbf{79.3} \\
    \bottomrule[0.15em]
    \end{tabular}%
    }
    \caption{Attack success rates (\%) against multimodal LLMs. 
    Results are reported as Closed / Open-ended, corresponding to two evaluation protocols: a closed-ended binary prompt (``Is this image a photo of \{true label\}? Yes or No?'') and an open-ended descriptive prompt (``What is the main object in this image?'').
    We randomly sample 100 images from the ImageNet-compatible dataset and use ResNet-50 as the surrogate model.
    The best results are shown in bold.
    }
    \label{tab:MLLMs}
  \end{table}

\textbf{Stability Analysis.}
As our methods involve random sampling of four corner points, there might be concerns about the statistical robustness of the reported single-run results.
It is important to note that other recent state-of-the-art methods also rely heavily on randomness during their transformations. For instance, SIA randomly selects block positions and transformation operations, BSR randomizes block positions and rotation angles, and SID applies random positional perturbations.
To ensure a rigorous comparison and to verify that our improvements are consistent, we report the mean and standard deviation of the attack success rates over 5 independent runs in Table~\ref{tab:stability}.
As shown, both PIA and PIA-Mix exhibit very low variance across all target models; in particular, the maximum standard deviation of PIA-Mix is only $0.30\%$, which is smaller than the corresponding maxima of BSR ($0.81\%$) and SID ($0.64\%$).
These results confirm that the performance of our methods is highly stable and the reported improvements are robust.

\begin{table*}[htbp]
    \centering
    \resizebox{0.92\textwidth}{!}{%
    \begin{tabular}{lc|cccccccccc}
    \toprule[0.15em]
    Method & Runs & 
    RN-50 & DN-121 & Inc-v3 & MB-v2 & ViT & LeViT & ConViT & PiT 
    \\ \midrule
    SIA & 5 & 100.0$\pm$0.00 & 99.9$\pm$0.04 & 95.3$\pm$0.50 & 99.6$\pm$0.08 & 45.9$\pm$0.48 & 69.3$\pm$0.38 & 38.6$\pm$0.51 & 58.3$\pm$0.47 \\
    BSR & 5 & 100.0$\pm$0.00 & 99.8$\pm$0.12 & 95.0$\pm$0.19 & 99.7$\pm$0.10 & 47.6$\pm$0.81 &70.0$\pm$0.73 & 40.1$\pm$0.74 & 58.5$\pm$0.78 \\
    SID & 5 & 100.0$\pm$0.00 & 99.9$\pm$0.06 & 97.6$\pm$0.18 & 99.6$\pm$0.09 & 57.9$\pm$0.44 & 81.6$\pm$0.38 & 52.4$\pm$0.47 & 66.7$\pm$0.64 \\
    \midrule
    \rowcolor{gray!10}
    PIA (Ours) & 5 & 100.0$\pm$0.00 & 99.9$\pm$0.08 & 97.8$\pm$0.19 & 99.7$\pm$0.12 & 60.3$\pm$0.39 & 82.2$\pm$0.50 & 54.5$\pm$0.42 & 69.0$\pm$0.35 \\
    \rowcolor{gray!10}
    PIA-Mix (Ours) & 5 & 100.0$\pm$0.00 & 99.9$\pm$0.06 & 98.5$\pm$0.21 & 99.8$\pm$0.10 & 64.3$\pm$0.28 & 87.4$\pm$0.30 & 58.1$\pm$0.20 & 76.3$\pm$0.14 \\
    \bottomrule[0.15em]

    \end{tabular}%
    }
    \caption{Mean and standard deviation of attack success rates (\%) over 5 independent runs on the ImageNet-compatible dataset. The surrogate model is RN-50.}

    \label{tab:stability}
\end{table*}

\section{Ablation Study}
\label{sec:ablation}

We conduct systematic ablation studies to validate the contribution of each component in our method.
All experiments use RN-50 as the surrogate model and are evaluated on the ImageNet-compatible dataset.
Sections~\ref{sec:abla_vertex} and \ref{sec:abla_distortion} study PIA in isolation (without the complementary transformation pool), directly demonstrating the effectiveness of our perspective transformation itself.
Section~\ref{sec:abla_pool} then investigates the pool composition of PIA-Mix.

\begin{table*}[!t]
    \centering
    \resizebox{0.92\textwidth}{!}{%
    \begin{tabular}{ccc|ccccccccc}
    \toprule[0.15em]
    Method &
    Degrees of Freedom &
    Vertex Mode Distribution &
    RN-50 & DN-121 & Inc-v3 & MB-v2 & ViT & LeViT & ConViT & PiT & Average
    \\ \midrule
    \multirow{8}{*}{PIA}
    & 8       & (1,0,0,0)             & 100 & 99.9 & 98.0 & 99.7 & 55.4 & 79.0 & 49.3 & 64.2 & 80.7 \\
    & $8\!\to\!6$ & (1,0,0,0), $h_{31}{=}h_{32}{=}0$ & 100 & 99.9 & 97.1 & 99.4 & 47.6 & 71.8 & 41.7 & 58.2 & 77.0 \\ 
    & 6       & (0,1,0,0)             & 100 & 99.9 & 97.3 & 99.3 & 47.3 & 71.9 & 41.3 & 57.9 & 76.9 \\
    & 4       & (0,0,1,0)             & 100 & 99.4 & 95.2 & 97.9 & 45.7 & 69.1 & 37.8 & 56.8 & 75.2 \\
    & 2       & (0,0,0,1)             & 100 & 99.0 & 92.8 & 96.1 & 43.0 & 66.6 & 36.4 & 55.2 & 73.6 \\
    & 8/6/4/2 & (0.25,0.25,0.25,0.25) & 100 & 99.9 & 98.0 & 99.8 & 58.7 & 80.6 & 52.3 & 67.0 & 82.0 \\
    & 8/6/4/2 & (0.1,0.2,0.3,0.4)     & 100 & 99.9 & 97.8 & 99.6 & 55.2 & 78.4 & 48.6 & 63.1 & 80.3 \\
    & 8/6/4/2 & (0.4,0.3,0.2,0.1)     & \textbf{100} & \textbf{99.9} & \textbf{98.1} & \textbf{99.8} & \textbf{60.0} & \textbf{81.9} & \textbf{54.3} & \textbf{69.3} & \textbf{82.9} \\

    \bottomrule[0.15em]
    \end{tabular}%
    }
    \caption{Ablation study on the degrees of freedom (DOF) and multi-DOF vertex sampling strategy.
    All experiments use PIA without the complementary transformation pool, with RN-50 as the surrogate model on the ImageNet-compatible dataset.
    The vertex mode distribution $(\pi_4, \pi_3, \pi_2, \pi_1)$ specifies the sampling probabilities for 8, 6, 4, and 2 DOF modes, respectively.
    The $h_{31}{=}h_{32}{=}0$ row forces the projective terms of the 8-DOF sampling to zero, reducing it to a 6-DOF affine map, as a direct ablation of the higher-order terms.
    The best results are shown in bold.}
    \label{tab:abla_vertex}
\end{table*}

\begin{table*}[!t]
    \centering
    \resizebox{0.68\textwidth}{!}{%
    \begin{tabular}{cc|ccccccccc}
    \toprule[0.15em]
    Method &
    Distortion Rate &
    RN-50 & DN-121 & Inc-v3 & MB-v2 & ViT & LeViT & ConViT & PiT & Average
    \\ \midrule
    \multirow{9}{*}{PIA} 
    & 0.1 & 100 & 99.5 & 96.1 & 99.3 & 46.9 & 68.2 & 41.0 & 58.6 & 76.2 \\
    & 0.2 & 100 & 99.7 & 96.5 & 99.5 & 49.2 & 70.7 & 42.5 & 61.1 & 77.4 \\
    & 0.3 & 100 & 99.7 & 97.0 & 99.6 & 53.7 & 75.9 & 48.0 & 63.1 & 79.6 \\
    & 0.4 & 100 & 99.9 & 97.3 & 99.7 & 56.6 & 77.2 & 50.8 & 65.2 & 80.8 \\
    & 0.5 & 100 & 99.9 & 97.9 & 99.8 & 58.8 & 79.5 & 52.0 & 66.4 & 81.8 \\
    & 0.6 & \textbf{100} & \textbf{99.9} & 98.1 & \textbf{99.8} & \textbf{60.0} & \textbf{81.9} & \textbf{54.3} & \textbf{69.3} & \textbf{82.9} \\
    & 0.7 & 100 & 99.9 & \textbf{98.2} & 99.7 & 59.1 & 80.5 & 52.5 & 67.1 & 82.1 \\
    & 0.8 & 100 & 99.6 & 97.5 & 99.7 & 56.7 & 78.3 & 51.8 & 63.4 & 80.9 \\
    & 0.9 & 100 & 99.5 & 96.8 & 98.3 & 49.8 & 74.4 & 44.8 & 61.3 & 78.1 \\

    \bottomrule[0.15em]
    \end{tabular}%
    }
    \caption{Ablation study on the distortion rate $\delta_{max}$.
    All experiments use PIA with the multi-DOF vertex sampling distribution $(0.4, 0.3, 0.2, 0.1)$, with RN-50 as the surrogate model on the ImageNet-compatible dataset.
    The best results are shown in bold.}
    \label{tab:abla_distortion}
\end{table*}

\begin{table*}[!t]
    \centering
    \resizebox{0.83\textwidth}{!}{%
    \begin{tabular}{ccc|ccccccccc}
    \toprule[0.15em]
    Method &
    Pool Method &
    Probability List &
    RN-50 & DN-121 & Inc-v3 & MB-v2 & ViT & LeViT & ConViT & PiT & Average
    \\ \midrule
    \multirow{7}{*}{PIA-Mix} 
    & PIA                       & (1.0)        & 100 & 99.9 & 98.1 & 99.8 & 60.0 & 81.9 & 54.3 & 69.3 & 82.9 \\
    & PIA, SIA                  & (0.5, 0.5)   & 100 & 99.9 & 98.3 & 99.8 & 62.1 & 82.5 & 55.2 & 70.8 & 83.6 \\
    & PIA, BSR                  & (0.5, 0.5)   & 100 & 99.8 & 98.4 & 99.6 & 62.6 & 81.7 & 55.1 & 70.2 & 83.4 \\
    & PIA, SID                  & (0.5, 0.5)   & \textbf{100} & \textbf{99.9} & \textbf{98.5} & \textbf{99.8} & \textbf{64.5} & \textbf{86.9} & \textbf{58.5} & 76.2 & \textbf{85.5} \\
    & PIA, BSR, SIA & (0.5, 0.25, 0.25)        & 100 & 99.9 & 98.4 & 99.8 & 62.8 & 84.3 & 55.9 & 73.7 & 84.4 \\
    & SID, BSR, SIA & (0.5, 0.25, 0.25)        & 100 & 99.9 & 98.2 & 99.8 & 61.0 & 82.5 & 54.4 & 72.2 & 83.5 \\
    & PIA, SID, BSR, SIA & (0.5, 1/6, 1/6, 1/6)& 100 & 99.9 & 98.5 & 99.7 & 63.8 & 85.8 & 57.2 & \textbf{76.5} & 85.2 \\

    \bottomrule[0.15em]
    \end{tabular}%
    }
    \caption{Ablation study on the composition of the complementary transformation pool $\mathcal{P}$ in PIA-Mix.
    The probability list specifies the sampling probability of each method in the pool.
    All experiments use RN-50 as the surrogate model on the ImageNet-compatible dataset.
    The best results are shown in bold.}

    \label{tab:abla_pool}
\end{table*}

\subsection{Effect of Multi-DOF Vertex Sampling}
\label{sec:abla_vertex}

Table~\ref{tab:abla_vertex} isolates the effect of our multi-DOF vertex sampling within PIA, without the complementary transformation pool.

We first examine the impact of using a single fixed DOF.
The results show a clear monotonic trend: increasing the DOF consistently improves attack performance, with average ASRs of $73.6\%$, $75.2\%$, $76.9\%$, and $80.7\%$ for 2, 4, 6, and 8 DOF, respectively.
This validates our motivation that higher geometric flexibility produces more diverse transformations and leads to stronger adversarial transferability.
The improvement is particularly pronounced on challenging target models such as ViT and ConViT, where the ASR increases from $43.0\%$ and $36.4\%$ (2-DOF) to $55.4\%$ and $49.3\%$ (8-DOF), respectively.
Moreover, forcing $h_{31}{=}h_{32}{=}0$ (reducing the 8-DOF map to a 6-DOF affine one) lowers the average ASR from $80.7\%$ to $77.0\%$, confirming that the higher-order terms contribute meaningfully under the budget.

We then evaluate our multi-DOF vertex sampling strategy, which mixes transformations of different DOFs according to a categorical distribution $\boldsymbol{\pi} = (\pi_4, \pi_3, \pi_2, \pi_1)$.
Two of the three multi-DOF configurations outperform the best single-DOF setting (8-DOF, $80.7\%$), confirming the benefit of covering the full geometric subspace hierarchy rather than relying on a single DOF level.
Among them, the distribution $(0.4, 0.3, 0.2, 0.1)$, which assigns higher probability to higher-DOF modes, achieves the best average ASR of $82.9\%$, outperforming the single 8-DOF setting by $2.2\%$.
We adopt this distribution as the default for all subsequent experiments.

\subsection{Effect of Distortion Rate}
\label{sec:abla_distortion}

Table~\ref{tab:abla_distortion} examines the effect of the maximum distortion rate $\delta_{max}$, which controls the displacement range of corner positions in Equation~\ref{eq:perspective_transformation_corner}.
We use PIA with the optimal multi-DOF vertex sampling distribution $(0.4, 0.3, 0.2, 0.1)$.

The average ASR increases steadily as $\delta_{max}$ grows from $0.1$ ($76.2\%$) to $0.6$ ($82.9\%$), then decreases for larger values ($82.1\%$ at $0.7$, $78.1\%$ at $0.9$).
This reveals a clear trade-off: a small $\delta_{max}$ limits the diversity of perspective transformations, while an excessively large $\delta_{max}$ introduces heavily distorted views that degrade gradient quality.
The optimal value $\delta_{max} = 0.6$ achieves the best balance between transformation diversity and gradient reliability, and we adopt it as the default setting.

\subsection{Effect of Complementary Transformation Pool}
\label{sec:abla_pool}

Table~\ref{tab:abla_pool} investigates the composition of the complementary transformation pool $\mathcal{P}$ in PIA-Mix.
Importantly, the total number of transformed copies $N$ per iteration is kept strictly constant across all configurations, ensuring that any performance improvement arises purely from transformation diversity rather than additional computation.

We fix PIA's sampling probability at $0.5$ and distribute the remaining probability evenly among auxiliary methods.
We consider three recent input transformation methods as pool candidates: BSR, SIA, and SID.
Adding any single method to PIA's pool consistently improves over PIA alone ($82.9\%$): PIA + SIA achieves $83.6\%$, PIA + BSR achieves $83.4\%$, and PIA + SID achieves $85.5\%$.
Among all configurations, PIA + SID yields the highest average ASR of $85.5\%$, slightly outperforming the full four-method pool (PIA + SID + BSR + SIA, $85.2\%$) and notably surpassing the pool without SID (PIA + BSR + SIA, $84.4\%$).
This suggests that SID provides the strongest complementarity to PIA, likely because its multi-scale spatial transformations with local image fusion are inherently orthogonal to PIA's global geometric transformations.
Adding BSR and SIA on top of PIA + SID does not yield further improvement, indicating diminishing returns from pooling methods that operate in similar (spatial) domains.

Notably, replacing PIA with the combination of SID, BSR, and SIA achieves only $83.5\%$, which is substantially lower than PIA + SID ($85.5\%$) and only marginally exceeds PIA alone ($82.9\%$).
This demonstrates that PIA is the primary driver of the transferability improvement in PIA-Mix, and that simply combining existing methods without PIA cannot replicate its effectiveness.
Based on these results, we adopt PIA + SID with probabilities $(0.5, 0.5)$ as the default PIA-Mix configuration.

\begin{figure*}[htbp]
    \centering
    \includegraphics[width=0.72\textwidth]{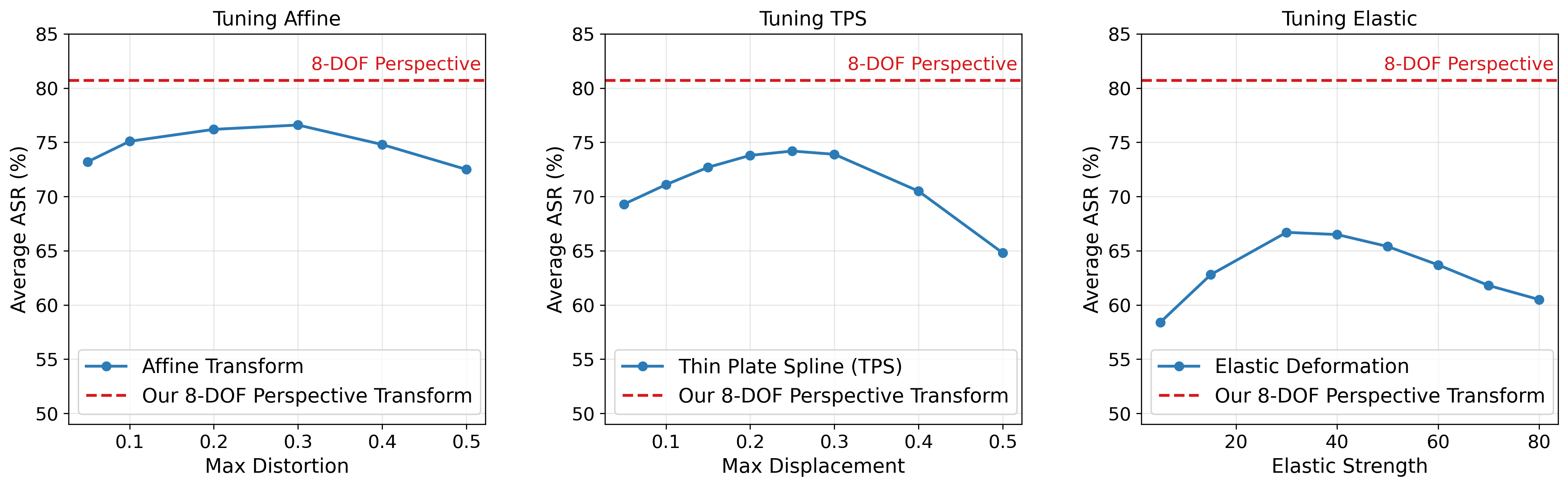}
    \caption{Comparison of our 8-DOF perspective transformation against three alternative transformations with exhaustive hyperparameter tuning, using RN-50 as the surrogate model.
    The red dashed line indicates the average ASR of our 8-DOF perspective transformation ($80.7\%$).
    \textbf{(Left)}~Affine transformation (6-DOF), sweeping the maximum distortion rate.
    \textbf{(Middle)}~Thin-plate spline (TPS) with a $3\times3$ control grid (18-DOF), sweeping the maximum displacement.
    \textbf{(Right)}~Elastic deformation (high-DOF), sweeping the elastic strength.
    Our perspective transformation consistently outperforms all alternatives at their respective optimal settings.}
    \label{fig:compare_non_rigid}
\end{figure*}

\subsection{Comparison with Alternative Transformations}
\label{sec:compare_non_rigid}

As discussed in our motivation (Section~\ref{sec:motivation}), a natural question is whether transformations with different degrees of freedom, such as lower-DOF global or higher-DOF non-rigid operations, could yield better adversarial transferability than our 8-DOF perspective transformation.
To systematically address this, we conduct comprehensive hyperparameter sweeps for three representative alternative transformations and compare their best-case performance against our method.
All experiments use the MI-FGSM framework with RN-50 as the surrogate model, and we report the average ASR across all eight target models.
The red dashed line in Figure~\ref{fig:compare_non_rigid} indicates the ASR of our 8-DOF perspective transformation ($80.7\%$), which uses only a single DOF level without multi-DOF vertex sampling or the complementary transformation pool, serving as a clean baseline.

\textbf{Affine Transformation (6-DOF).}
We parameterize the affine transformation using randomized corner displacements controlled by a maximum distortion rate relative to the image width, and sweep this rate from $0.05$ to $0.5$.
As shown in Figure~\ref{fig:compare_non_rigid}(Left), affine transformation reaches its optimal ASR of approximately $76\%$ at a distortion rate around $0.3$, which is about $4.7\%$ below our perspective baseline.
This confirms that affine's 6-DOF geometric space, while preserving structural integrity, lacks sufficient diversity for strong transferability.

\textbf{Thin-Plate Spline (TPS) Transformation.}
We adopt a $3 \times 3$ grid of control points, where each point is independently displaced within a bounded range.
With 9 control points each having 2D displacements, TPS has 18 effective DOF, which is more than double that of our perspective transformation.
We sweep the maximum displacement from $0.05$ to $0.5$.
As shown in Figure~\ref{fig:compare_non_rigid}(Middle), TPS reaches its optimal ASR below $75\%$ at a displacement around $0.25$, still well below our baseline.
Despite having considerably higher DOF, TPS introduces non-rigid warping that bends straight lines and distorts object structures, producing unreliable adversarial gradients.

\textbf{Elastic Deformation.}
Elastic deformation generates a dense pixel-wise displacement field smoothed by an averaging kernel, representing an extremely high effective DOF.
We sweep the elastic strength from $10$ to $80$.
As shown in Figure~\ref{fig:compare_non_rigid}(Right), elastic deformation achieves its optimal ASR below $70\%$ at a strength around $30$, falling short of our baseline by over $10\%$.
The unconstrained pixel-level warping severely disrupts semantic and geometric structures of the input, leading to highly unreliable gradient directions for transfer attacks.

These results comprehensively validate our transformation choice.
Even after exhaustive hyperparameter tuning, no alternative transformation, whether lower-DOF global (affine) or higher-DOF non-rigid (TPS, elastic), surpasses our 8-DOF perspective transformation.
This confirms the conjecture in Section~\ref{sec:motivation}: the projective transformation achieves an effective balance between geometric diversity and structural preservation.
It is the highest-DOF global transformation that still preserves straight lines and geometric structures, maximizing input diversity without the destructive warping introduced by non-rigid operations.

\section{Conclusion}

In this work, we revisit existing input transformation-based attacks and identify that recent methods rely on spatial augmentations within the image plane, overlooking global perspective transformations that naturally arise from viewpoint changes.
To address this limitation, we propose Perspective-Invariant Attack (PIA), which introduces a multi-DOF vertex sampling strategy to systematically explore the perspective transformation hierarchy and generate geometrically diverse input variations.
We further propose PIA-Mix, which combines PIA with complementary transformations via a configurable transformation pool for additional performance gains. Comprehensive experiments demonstrate that PIA and PIA-Mix significantly outperform prior methods across various models, datasets, and defense mechanisms. Empirical results also show that our methods can be easily and effectively integrated with other attack techniques and frameworks, providing a practical and effective approach for evaluating model robustness.

% \clearpage

%%%%%%%%% REFERENCES
\bibliographystyle{IEEEtran}
\bibliography{sample-base}

% Generated by IEEEtran.bst, version: 1.14 (2015/08/26)
\begin{thebibliography}{10}
\providecommand{\url}[1]{#1}
\csname url@samestyle\endcsname
\providecommand{\newblock}{\relax}
\providecommand{\bibinfo}[2]{#2}
\providecommand{\BIBentrySTDinterwordspacing}{\spaceskip=0pt\relax}
\providecommand{\BIBentryALTinterwordstretchfactor}{4}
\providecommand{\BIBentryALTinterwordspacing}{\spaceskip=\fontdimen2\font plus
\BIBentryALTinterwordstretchfactor\fontdimen3\font minus
  \fontdimen4\font\relax}
\providecommand{\BIBforeignlanguage}[2]{{%
\expandafter\ifx\csname l@#1\endcsname\relax
\typeout{** WARNING: IEEEtran.bst: No hyphenation pattern has been}%
\typeout{** loaded for the language `#1'. Using the pattern for}%
\typeout{** the default language instead.}%
\else
\language=\csname l@#1\endcsname
\fi
#2}}
\providecommand{\BIBdecl}{\relax}
\BIBdecl

\bibitem{krizhevsky2012imagenet}
A.~Krizhevsky, I.~Sutskever, and G.~E. Hinton, ``Imagenet classification with
  deep convolutional neural networks,'' \emph{{NeurIPS}}, 2012.

\bibitem{He_2016_CVPR}
K.~He, X.~Zhang, S.~Ren, and J.~Sun, ``Deep residual learning for image
  recognition,'' in \emph{{CVPR}}, 2016.

\bibitem{SzegedyVISW16}
C.~Szegedy, V.~Vanhoucke, S.~Ioffe, J.~Shlens, and Z.~Wojna, ``Rethinking the
  inception architecture for computer vision,'' in \emph{{CVPR}}, 2016.

\bibitem{geiger2012we}
A.~Geiger, P.~Lenz, and R.~Urtasun, ``Are we ready for autonomous driving? the
  kitti vision benchmark suite,'' in \emph{{CVPR}}, 2012.

\bibitem{ronneberger2015u}
O.~Ronneberger, P.~Fischer, and T.~Brox, ``U-net: Convolutional networks for
  biomedical image segmentation,'' in \emph{{MICCAI}}, 2015.

\bibitem{schroff2015facenet}
F.~Schroff, D.~Kalenichenko, and J.~Philbin, ``Facenet: A unified embedding for
  face recognition and clustering,'' in \emph{{CVPR}}, 2015.

\bibitem{GoodfellowSS14}
I.~J. Goodfellow, J.~Shlens, and C.~Szegedy, ``Explaining and harnessing
  adversarial examples,'' in \emph{{ICLR}}, 2015.

\bibitem{KurakinGB17a}
A.~Kurakin, I.~J. Goodfellow, and S.~Bengio, ``Adversarial examples in the
  physical world,'' in \emph{{ICLR} (Workshop)}, 2017.

\bibitem{eykholt2018robust}
K.~Eykholt, I.~Evtimov, E.~Fernandes, B.~Li, A.~Rahmati, C.~Xiao, A.~Prakash,
  T.~Kohno, and D.~Song, ``Robust physical-world attacks on deep learning
  visual classification,'' in \emph{{CVPR}}, 2018.

\bibitem{MadryMSTV18}
A.~Madry, A.~Makelov, L.~Schmidt, D.~Tsipras, and A.~Vladu, ``Towards deep
  learning models resistant to adversarial attacks,'' in \emph{{ICLR}}, 2018.

\bibitem{XieZZBWRY19}
C.~Xie, Z.~Zhang, Y.~Zhou, S.~Bai, J.~Wang, Z.~Ren, and A.~L. Yuille,
  ``Improving transferability of adversarial examples with input diversity,''
  in \emph{{CVPR}}, 2019.

\bibitem{DongLPS0HL18}
Y.~Dong, F.~Liao, T.~Pang, H.~Su, J.~Zhu, X.~Hu, and J.~Li, ``Boosting
  adversarial attacks with momentum,'' in \emph{{CVPR}}, 2018.

\bibitem{Wang_2021_admix}
X.~Wang, X.~He, J.~Wang, and K.~He, ``Admix: Enhancing the transferability of
  adversarial attacks,'' in \emph{{ICCV}}, 2021.

\bibitem{moosavi2016deepfool}
S.-M. Moosavi-Dezfooli, A.~Fawzi, and P.~Frossard, ``Deepfool: a simple and
  accurate method to fool deep neural networks,'' in \emph{{CVPR}}, 2016.

\bibitem{carlini2017towards}
N.~Carlini and D.~Wagner, ``Towards evaluating the robustness of neural
  networks,'' in \emph{IEEE Symposium on Security and Privacy ({SP})}, 2017,
  pp. 39--57.

\bibitem{BrendelRB18_2018_iclr}
W.~Brendel, J.~Rauber, and M.~Bethge, ``Decision-based adversarial attacks:
  Reliable attacks against black-box machine learning models,'' in
  \emph{{ICLR}}, 2018.

\bibitem{LinS00H20}
J.~Lin, C.~Song, K.~He, L.~Wang, and J.~E. Hopcroft, ``Nesterov accelerated
  gradient and scale invariance for adversarial attacks,'' in \emph{{ICLR}},
  2020.

\bibitem{long2022frequency}
Y.~Long, Q.~Zhang, B.~Zeng, L.~Gao, X.~Liu, J.~Zhang, and J.~Song, ``Frequency
  domain model augmentation for adversarial attack,'' in \emph{{ECCV}}, 2022.

\bibitem{wang2023structure}
X.~Wang, Z.~Zhang, and J.~Zhang, ``Structure invariant transformation for
  better adversarial transferability,'' in \emph{{ICCV}}, 2023.

\bibitem{wang2024boosting}
K.~Wang, X.~He, W.~Wang, and X.~Wang, ``Boosting adversarial transferability by
  block shuffle and rotation,'' in \emph{{CVPR}}, 2024.

\bibitem{zhou2025leveraging}
Z.~Zhou, L.~Li, Y.~Ren, C.~Qin, and G.~Feng, ``Leveraging spatial invariance to
  boost adversarial transferability,'' in \emph{{ICCV}}, 2025.

\bibitem{SzegedyZSBEGF13}
C.~Szegedy, W.~Zaremba, I.~Sutskever, J.~Bruna, D.~Erhan, I.~J. Goodfellow, and
  R.~Fergus, ``Intriguing properties of neural networks,'' in \emph{{ICLR}},
  2014.

\bibitem{zhao2021success}
Z.~Zhao, Z.~Liu, and M.~Larson, ``On success and simplicity: A second look at
  transferable targeted attacks,'' in \emph{{NeurIPS}}, 2021.

\bibitem{liang2024improving}
K.~Liang, X.~Dai, Y.~Li, D.~Wang, and B.~Xiao, ``Improving transferable
  targeted attacks with feature tuning mixup,'' in \emph{{CVPR}}, 2025.

\bibitem{poursaeed2018generative}
O.~Poursaeed, I.~Katsman, B.~Gao, and S.~Belongie, ``Generative adversarial
  perturbations,'' in \emph{{CVPR}}, 2018.

\bibitem{naseer2019cross}
M.~M. Naseer, S.~H. Khan, M.~H. Khan, F.~Shahbaz~Khan, and F.~Porikli,
  ``Cross-domain transferability of adversarial perturbations,'' in
  \emph{{NeurIPS}}, 2019.

\bibitem{li2025uvattack}
Y.~Li, K.~Liang, and B.~Xiao, ``{UV}-attack: Physical-world adversarial attacks
  on person detection via dynamic-{N}e{RF}-based {UV} mapping,'' in
  \emph{{ICLR}}, 2025.

\bibitem{ilyas2018black}
A.~Ilyas, L.~Engstrom, A.~Athalye, and J.~Lin, ``Black-box adversarial attacks
  with limited queries and information,'' in \emph{{ICML}}, 2018.

\bibitem{Wu0X0M20}
D.~Wu, Y.~Wang, S.~Xia, J.~Bailey, and X.~Ma, ``Skip connections matter: On the
  transferability of adversarial examples generated with resnets,'' in
  \emph{{ICLR}}, 2020.

\bibitem{GuoLC20}
Y.~Guo, Q.~Li, and H.~Chen, ``Backpropagating linearly improves transferability
  of adversarial examples,'' in \emph{NeurIPS}, 2020.

\bibitem{liang2023styless}
K.~Liang and B.~Xiao, ``Styless: Boosting the transferability of adversarial
  examples,'' in \emph{{CVPR}}, 2023.

\bibitem{HuangKGHBL19}
Q.~Huang, I.~Katsman, Z.~Gu, H.~He, S.~J. Belongie, and S.~Lim, ``Enhancing
  adversarial example transferability with an intermediate level attack,'' in
  \emph{{ICCV}}, 2019.

\bibitem{dai2024advdiff}
X.~Dai, K.~Liang, and B.~Xiao, ``Advdiff: Generating unrestricted adversarial
  examples using diffusion models,'' in \emph{{ECCV}}, 2024.

\bibitem{DongPSZ19}
Y.~Dong, T.~Pang, H.~Su, and J.~Zhu, ``Evading defenses to transferable
  adversarial examples by translation-invariant attacks,'' in \emph{{CVPR}},
  2019.

\bibitem{lin2024boosting}
Q.~Lin, C.~Luo, Z.~Niu, X.~He, W.~Xie, Y.~Hou, L.~Shen, and S.~Song, ``Boosting
  adversarial transferability across model genus by deformation-constrained
  warping,'' in \emph{{AAAI}}, 2024.

\bibitem{zhu2024learning}
R.~Zhu, Z.~Zhang, S.~Liang, Z.~Liu, and C.~Xu, ``Learning to transform
  dynamically for better adversarial transferability,'' in \emph{{CVPR}}, 2024.

\bibitem{guo2025boosting}
Y.~Guo, W.~Liu, Q.~Xu, S.~Zheng, S.~Huang, Y.~Zang, S.~Shen, C.~Wen, and
  C.~Wang, ``Boosting adversarial transferability through augmentation in
  hypothesis space,'' in \emph{{CVPR}}, 2025.

\bibitem{LiuCLS17}
Y.~Liu, X.~Chen, C.~Liu, and D.~Song, ``Delving into transferable adversarial
  examples and black-box attacks,'' in \emph{{ICLR}}, 2017.

\bibitem{li2020learning}
Y.~Li, S.~Bai, Y.~Zhou, C.~Xie, Z.~Zhang, and A.~Yuille, ``Learning
  transferable adversarial examples via ghost networks,'' in \emph{{AAAI}},
  2020.

\bibitem{ZhouHCTHGY18}
W.~Zhou, X.~Hou, Y.~Chen, M.~Tang, X.~Huang, X.~Gan, and Y.~Yang,
  ``Transferable adversarial perturbations,'' in \emph{{ECCV}}, 2018.

\bibitem{Wang_2021_ICCV}
Z.~Wang, H.~Guo, Z.~Zhang, W.~Liu, Z.~Qin, and K.~Ren, ``Feature
  importance-aware transferable adversarial attacks,'' in \emph{{ICCV}}, 2021.

\bibitem{zhang2022improving}
J.~Zhang, W.~Wu, J.-t. Huang, Y.~Huang, W.~Wang, Y.~Su, and M.~R. Lyu,
  ``Improving adversarial transferability via neuron attribution-based
  attacks,'' in \emph{{CVPR}}, 2022.

\bibitem{liao2018defense}
F.~Liao, M.~Liang, Y.~Dong, T.~Pang, X.~Hu, and J.~Zhu, ``Defense against
  adversarial attacks using high-level representation guided denoiser,'' in
  \emph{{CVPR}}, 2018.

\bibitem{tramer2017ensemble}
F.~Tram{\`e}r, A.~Kurakin, N.~Papernot, I.~Goodfellow, D.~Boneh, and
  P.~McDaniel, ``Ensemble adversarial training: Attacks and defenses,'' in
  \emph{{ICLR}}, 2018.

\bibitem{wong2020fast}
E.~Wong, L.~Rice, and J.~Z. Kolter, ``Fast is better than free: Revisiting
  adversarial training,'' in \emph{{ICLR}}, 2020.

\bibitem{cohen2019certified}
J.~Cohen, E.~Rosenfeld, and Z.~Kolter, ``Certified adversarial robustness via
  randomized smoothing,'' in \emph{{ICML}}, 2019.

\bibitem{naseer2020self}
M.~Naseer, S.~Khan, M.~Hayat, F.~S. Khan, and F.~Porikli, ``A self-supervised
  approach for adversarial robustness,'' in \emph{{CVPR}}, 2020.

\bibitem{nie2022DiffPure}
W.~Nie, B.~Guo, Y.~Huang, C.~Xiao, A.~Vahdat, and A.~Anandkumar, ``Diffusion
  models for adversarial purification,'' in \emph{{ICML}}, 2022.

\bibitem{szeliski2022computer}
R.~Szeliski, \emph{Computer vision: algorithms and applications}.\hskip 1em
  plus 0.5em minus 0.4em\relax Springer Nature, 2022.

\bibitem{russakovsky2015imagenet}
O.~Russakovsky, J.~Deng, H.~Su, J.~Krause, S.~Satheesh, S.~Ma, Z.~Huang,
  A.~Karpathy, A.~Khosla, M.~Bernstein \emph{et~al.}, ``Imagenet large scale
  visual recognition challenge,'' \emph{International journal of computer
  vision ({IJCV})}, vol. 115, pp. 211--252, 2015.

\bibitem{krizhevsky2009learning}
A.~Krizhevsky, G.~Hinton \emph{et~al.}, ``Learning multiple layers of features
  from tiny images,'' 2009.

\bibitem{huang2017densely}
G.~Huang, Z.~Liu, L.~Van Der~Maaten, and K.~Q. Weinberger, ``Densely connected
  convolutional networks,'' in \emph{{CVPR}}, 2017.

\bibitem{sandler2018mobilenetv2}
M.~Sandler, A.~Howard, M.~Zhu, A.~Zhmoginov, and L.-C. Chen, ``Mobilenetv2:
  Inverted residuals and linear bottlenecks,'' in \emph{{CVPR}}, 2018.

\bibitem{dosovitskiy2020vit}
A.~Dosovitskiy, L.~Beyer, A.~Kolesnikov, D.~Weissenborn, X.~Zhai,
  T.~Unterthiner, M.~Dehghani, M.~Minderer, G.~Heigold, S.~Gelly, J.~Uszkoreit,
  and N.~Houlsby, ``An image is worth 16x16 words: Transformers for image
  recognition at scale,'' \emph{ICLR}, 2021.

\bibitem{graham2021levit}
B.~Graham, A.~El{-}Nouby, H.~Touvron, P.~Stock, A.~Joulin, H.~J{\'{e}}gou, and
  M.~Douze, ``Levit: a vision transformer in convnet's clothing for faster
  inference,'' in \emph{{ICCV}}, 2021.

\bibitem{d2021convit}
S.~d'Ascoli, H.~Touvron, M.~L. Leavitt, A.~S. Morcos, G.~Biroli, and L.~Sagun,
  ``Convit: Improving vision transformers with soft convolutional inductive
  biases,'' in \emph{{ICML}}, 2021.

\bibitem{heo2021rethinking}
B.~Heo, S.~Yun, D.~Han, S.~Chun, J.~Choe, and S.~J. Oh, ``Rethinking spatial
  dimensions of vision transformers,'' in \emph{{ICCV}}, 2021.

\bibitem{kariyappa2019improving}
S.~Kariyappa and M.~K. Qureshi, ``Improving adversarial robustness of ensembles
  with diversity training,'' \emph{arXiv preprint arXiv:1901.09981}, 2019.

\bibitem{yang2020dverge}
H.~Yang, J.~Zhang, H.~Dong, N.~Inkawhich, A.~Gardner, A.~Touchet, W.~Wilkes,
  H.~Berry, and H.~Li, ``Dverge: diversifying vulnerabilities for enhanced
  robust generation of ensembles,'' in \emph{{NeurIPS}}, 2020.

\end{thebibliography}
 
\end{document}